\pgfplotsset{compat=1.5}
\pgfplotsset{grid style={dashed,gray}}
\def\Red#1{\textcolor{red}{#1}}
\newcommand{\vmax}{V_{\mbox{\scriptsize max}}}
\newcommand{\amax}{A_{\mbox{\scriptsize max}}}
\newcommand{\tcritic}{t^*}
\newcommand{\lx}{\lambda_{x,1}}
\newcommand{\lv}{\lambda_{v,1}}
\newcommand{\lxx}{\lambda_{x,2}}
\newcommand{\lvv}{\lambda_{v,2}}
\newcommand{\lxxx}{\lambda_{x,3}}
\newcommand{\trans}{T}
\newcommand{\pstart}{p_\mathrm{start}}
\newcommand{\pgoal}{p_\mathrm{goal}}
\newcommand{\vstart}{v_\mathrm{start}}
\newcommand{\vgoal}{v_\mathrm{goal}}
\newcommand{\astart}{a_\mathrm{start}}
\newcommand{\agoal}{a_\mathrm{goal}}
\newcommand{\xstart}{x_\mathrm{start}}
\newcommand{\xgoal}{x_\mathrm{goal}}
\newcommand{\lmin}{\ell_\mathrm{m}}
\newtheorem{theorem}{Theorem}
\newtheorem{lemma}[theorem]{Lemma}
\newtheorem{problem}[theorem]{Problem}
\title{
A Hybrid Method for Online Trajectory Planning of Mobile Robots in Cluttered Environments
}
 \author{Leobardo~Campos-Mac\'ias,~\IEEEmembership{Student Member,~IEEE,}
  David~Gómez-Gutiérrez,~\IEEEmembership{Member,~IEEE,} \\Rodrigo Aldana-López,~\IEEEmembership{Student Member,~IEEE,} Rafael~de~la~Guardia and José~I.~Parra-Vilchis

%\thanks{Manuscript received September 10, 2016; Revised December 8, 2016; Accepted January 3, 2017.}
%\thanks{This paper was recommended for publication by Editor Antonio Bicchi upon evaluation of the Associate Editor and Reviewers' comments.}

 \thanks{The authors are with the Multi-Agent Autonomous Systems Lab, Intel Labs, Intel Tecnología de México. (e-mail: \href{mailto:L.E.CamposMacias@ieee.org}{l.e.camposmacias@ieee.org}, \href{mailto:David.Gomez.G@ieee.org}{david.gomez.g@ieee.org}, \href{mailto:A.Rodrigo.AldanaLopez@ieee.org}{a.rodrigo.aldanalopez@ieee.org}, \href{mailto:rafael.de.la.guardia@intel.com}{rafael.de.la.guardia@intel.com}, \href{mailto:jose.i.parra.vilchis@intel.com}{jose.i.parra.vilchis@intel.com}).
  %{\footnotesize \{\href{mailto:L.E.CamposMacias@ieee.org}{L.E.CamposMacias}, \href{mailto:David.Gomez.G@ieee.org}{David.Gomez.G}, \href{mailto:A.Rodrigo.AldanaLopez@ieee.org}{A.Rodrigo.AldanaLopez}\}@ieee.org}, {\footnotesize \{\href{mailto:rafael.de.la.guardia@intel.com}{rafael.de.la.guardia}, \href{mailto:jose.i.parra.vilchis@intel.com}{jose.i.parra.vilchis}\}@intel.com.}
  }

\thanks{A video of the experiments can be found at \url{https://youtu.be/DJ1IZRL5t1Q}}
\thanks{Digital Object Identifier (DOI):\href{http://dx.doi.org/10.1109/LRA.2017.2655145}{10.1109/LRA.2017.2655145}}

\thanks{\Red{
This is the author's accepted version of the manuscript: 
Campos-Macías, L., Gómez-Gutiérrez, D., Aldana-López, R., de la Guardia, R., \& Parra-Vilchis, J. I. (2017). A hybrid method for online trajectory planning of mobile robots in cluttered environments. IEEE Robotics and Automation Letters, 2(2), 935-942.
Please cite the publisher's version. For the publisher's version and full citation details see:
\url{https://doi.org/10.1109/LRA.2017.2655145}
}}
}
\begin{document}

\maketitle
%\thispagestyle{empty}
%\pagestyle{empty}
%\pagestyle{plain}

%%%%%%%%%%%%%%%%%%%%%%%%%%%%%%%%%%%%%%%%%%%%%%%%%%%%%%%%%%%%%%%%%%%%%%%%%%%%%%%%
\begin{abstract}
This paper presents a method for online trajectory planning in known environments. % given an initial and final state. 
The proposed algorithm is a fusion of sampling-based techniques and model-based optimization via quadratic programming. The former is used to efficiently generate an obstacle-free path while the latter takes into account the robot dynamical constraints to generate a time-dependent trajectory. The main contribution of this work lies on the formulation of a convex optimization problem over the generated obstacle-free path that is guaranteed to be feasible. Thus, in contrast with previously proposed methods, iterative formulations are not required. The proposed method has been compared with state-of-the-art approaches showing a significant improvement in success rate and computation time. To illustrate the effectiveness of this approach for online planning, the proposed method was applied to the fluid autonomous navigation of a quadcopter in multiple environments consisting of up to two hundred obstacles. The scenarios hereinafter presented are some of the most densely cluttered experiments for online planning and navigation reported to date. See video at \url{https://youtu.be/DJ1IZRL5t1Q}.
\end{abstract}

\begin{IEEEkeywords}
Aerial Robotics, Autonomous Agents, Autonomous Vehicle Navigation, Collision Avoidance, Motion and Path Planning
%Autonomous navigation, trajectory planning, Aerial Robotics, path planning, cluttered environments.
\end{IEEEkeywords}
%%%%%%%%%%%%%%%%%%%%%%%%%%%%%%%%%%%%%%%%%%%%%%%%%%%%%%%%%%%%%%%%%%%%%%%%%%%%%%%%
\section{Introduction}
\IEEEPARstart{T}{he} development of algorithms to enable mobile robotic systems to navigate in complex dynamic environments are of paramount importance towards fully autonomous systems performing complicated tasks. Recently, there has been great progress in online motion planning in structured environments, most notably for autonomous cars~\cite{Katrakazas2015,Paden2016}. However, to achieve similar levels of autonomy in three dimensional unstructured environments further development is required. Such capabilities are critical, for instance, when involved in search and rescue missions inside collapsed buildings.

Among the most efficient methods available for motion planning of mobile robots are sampling-based methods~\cite{Karaman2011,Elbanhawi2014,Gammell2015,Choudhury2016} and optimization-based methods~\cite{Deits2015,Landry2016}. In the former category, algorithms such as Rapidly-exploring Random Trees (\textit{RRT})~\cite{Karaman2011,Elbanhawi2014} or Batch Informed Trees (\textit{BIT*})~\cite{Gammell2015,Choudhury2016}, have demonstrated to be effective in determining obstacle-free paths online. However, their extension to trajectory planning taking into account the robot dynamical constraints often results in time consuming algorithms that are no longer applicable online~\cite{Webb2013,Xie2015}. In the latter category, optimization-based methods take advantage of the underlying mathematical model and have proven to be effective in handling the robot dynamical constraints~\cite{Richter2016,Augugliaro2012,Deits2015}. The downside is that optimization problems often result in nonlinear programs~\cite{Deits2015,Landry2016} or Sequential Convex Programs (\textit{SCP})~\cite{Augugliaro2012, Chen2015decoupled,Schulman2014} for which no efficient solvers exist. 
This characteristic makes them unsuitable for online planning in cluttered environments.

To take advantage of the efficiency of sampling-based algorithms and the inclusion of the dynamical constraints in optimization-based methods, hybrid approaches have been proposed~\cite{Chen2016,Richter2016,Zucker2013,Kalakrishnan2011}. Among the most promising are \textit{CHOMP}~\cite{Zucker2013} and \textit{STOMP}~\cite{Kalakrishnan2011}. Both methods rely on the optimization of an objective function with smoothness and collision costs solved by functional gradient descent in \textit{CHOMP} and by gradient-free candidate sampling in \textit{STOMP}. Unfortunately, in cluttered environments reported results show low success rate, often converging to local minima or altogether failing to find feasible solutions. While hybrid approaches are promising, the main challenge for their broader use in online planning is creating formulations and heuristics that translate into optimization problems that can be solved efficiently, avoiding incremental or iterative solutions.

\begin{figure}[t]
\begin{center}         
\includegraphics[width=8cm]{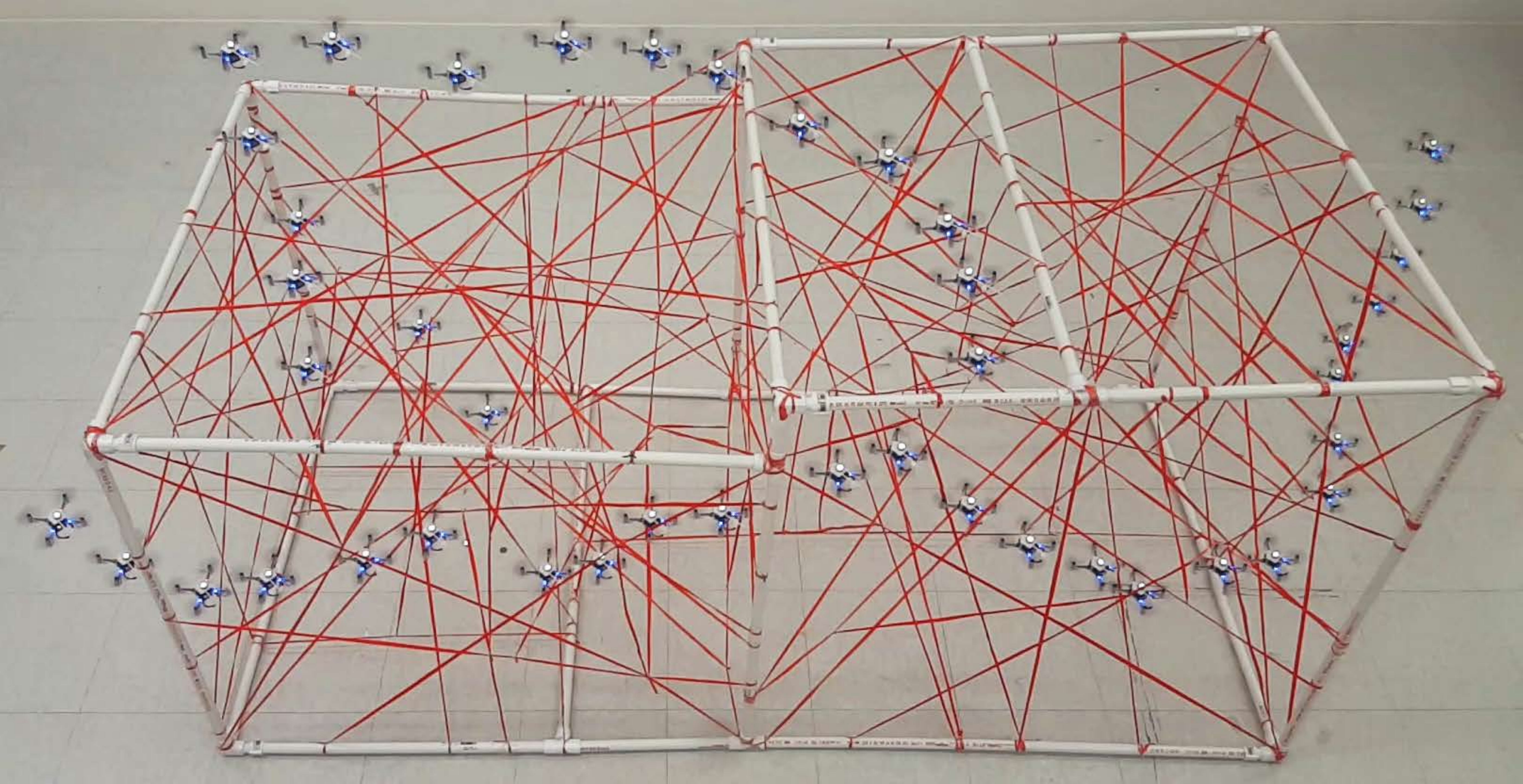}
\end{center}
\caption{Composite image of a quadcopter executing a trajectory planned in a fraction of a second for an obstacle-dense environment. See video at \url{https://youtu.be/DJ1IZRL5t1Q}.}
\label{Fig:ExperimentStrings}
\end{figure}

Regarding methods for trajectory planning in cluttered environments, in~\cite{Landry2016} semidefinite programming is used to segment the space into convex regions while a mixed-integer convex program determines the obstacle-free trajectory. Unfortunately results show infeasibility for online implementation since reported solutions are in the order of minutes. In~\cite{Chen2016} a free-space flight corridor is generated using an octree-based structure followed by a Quadratic Program (\textit{QP}) formulation using connected polynomial functions for each overlapping convex region. The method assumes that each segment end-time is given, though, no methodology for automatically generating such times is provided. This is a major drawback as a bad selection may result in infeasible problems which will require iterative \textit{QP} formulations, failing to provide an online solution. A similar approach is presented in \cite{Richter2016}, in which collision-free paths are obtained using the asymptotically optimal version of the \textit{RRT} algorithm (\textit{RRT*}) proposed in~\cite{Karaman2011}, followed by smooth trajectories generation using a \textit{QP} that has to be solved iteratively to determine the total time of the trajectory. Since iterative solutions are undesirable for online implementations and furthermore, as noted in~\cite{Chen2016}, collisions may still occur as the trajectory deviates from the original path. To overcome this problem, the authors proposed to add intermediate waypoints, but no method to determine the number of waypoints needed is provided. In~\cite{Schulman2014} a trajectory optimization algorithm inspired in \textit{CHOMP} was presented, which organizes the workspace in convex regions to perform a \textit{SCP}, improving the computation time of~\textit{CHOMP} but with the drawback that convex regions are hard to compute online. Also, inspired in~\textit{CHOMP}, in  \cite{Oleynikova2016} an online trajectory optimization method for local replanning was presented. Unfortunately, it often converges to local minima or fails to find feasible solutions, obtaining a low success rate compared to sampling-based methods.

In this paper, a hybrid method for online trajectory planning in cluttered environments is proposed. Based on a path generated by a sampling-based planner a \textit{QP} is defined taking into account the robot dynamical constraints. The proposed formulation ensures feasibility of the \textit{QP}, avoiding the need to solve multiple optimization problems in an iterative or sequential fashion~\cite{Augugliaro2012,Richter2016,Chen2015decoupled,Schulman2014}. 
Comparisons with state-of-the-art approaches, such as~\cite{Chen2015decoupled} and~\cite{Schulman2014}, are presented showing superior performance in computation time and success rate. To illustrate its effectiveness, the proposed method was applied for the online planning of a palm-sized quadcopter in multiple scenarios consisting of up to two hundred static obstacles. These scenarios are some of the most densely cluttered for online planning of a quadcopter reported in the literature to date. One of these experiments is illustrated in Fig.~\ref{Fig:ExperimentStrings}. 

\section{Outline of the Proposed Method}\label{sec:outline}

\subsection{Notation}

Let $\mathcal{B}(p)$ be the smallest ball, centered in the robot's centroid with position $p$, containing the mobile robot. The center of $\mathcal{B}(p)$ is modeled as a free particle in a non-rotating frame with state $x=\left[ p^T \ v^T \ a^T\right]^T$, where $[\cdot]^T$ is the transpose operator, $p$ is the position, $v$ is the velocity and  $a$ is the acceleration of the particle.
The configuration space of the particle is denoted by $\chi=[0,1]^d$, with $d\in\{2,3\}$ as its dimension. The obstacle region is $\chi_{\mathrm{obsReal}}$ and the obstacle-free space is defined as $\chi_{\mathrm{Free}}=\{p\in\chi|\mathcal{B}(p)\cap \chi_{\mathrm{obsReal}}=\emptyset \}$.

The notation $f(\cdot)$ is used to represent a continuous signal while $f[\cdot]$ represents a discrete signal. 
A path $\eta : \{0,\cdots,S\} \mapsto \chi_\mathrm{free}$ is a sequence of position nodes. Given a discrete path $\eta[s]$ a continuous path $\eta(\mathbf{s})$ can be obtained as the concatenation of the line segments connecting consecutive nodes. A trajectory $p(t)$ is a time-dependent sequence of positions, with velocity $v(t)$, acceleration $a(t)$ and jerk $j(t)$. Continuous-time is represented by $t$ while $k$ represents discrete-time. Thus, for a time step $h$, $p(t)$ is a continuous-time trajectory while $p[k]$ is a discrete-time trajectory such that $p(kh)=p[k]$. 

The set of dynamical constraints defines a convex set of allowed states $x=\left[ p^T \ v^T \ a^T\right]^T$ that is denoted by $\mathcal{X}_{\mathrm{allowed}}$.

\subsection{Problem statement and outline of the method}

\begin{figure}
\begin{center}
	\def\svgwidth{8.5cm} 
\begingroup%
  \makeatletter%
  \providecommand\color[2][]{%
    \errmessage{(Inkscape) Color is used for the text in Inkscape, but the package 'color.sty' is not loaded}%
    \renewcommand\color[2][]{}%
  }%
  \providecommand\transparent[1]{%
    \errmessage{(Inkscape) Transparency is used (non-zero) for the text in Inkscape, but the package 'transparent.sty' is not loaded}%
    \renewcommand\transparent[1]{}%
  }%
  \providecommand\rotatebox[2]{#2}%
  \ifx\svgwidth\undefined%
    \setlength{\unitlength}{370.24993515bp}%
    \ifx\svgscale\undefined%
      \relax%
    \else%
      \setlength{\unitlength}{\unitlength * \real{\svgscale}}%
    \fi%
  \else%
    \setlength{\unitlength}{\svgwidth}%
  \fi%
  \global\let\svgwidth\undefined%
  \global\let\svgscale\undefined%
  \makeatother%
  \begin{picture}(1,0.95)%
  \footnotesize{
	\put(0.415,0.9){\color[rgb]{0,0,0}\makebox(0,0)[lb]{\smash{$\xstart\ $ $\xgoal$}}}%  
    \put(0.36,0.805){\color[rgb]{0,0,0}\makebox(0,0)[lb]{\smash{Generation of $\chi_\mathrm{obs}$}}}%
    \put(0.4,0.715){\color[rgb]{0,0,0}\makebox(0,0)[lb]{\smash{Path Planning}}}%  
    \put(0.4,0.63){\color[rgb]{0,0,0}\makebox(0,0)[lb]{\smash{Path Refining}}}%    
    \put(0.34,0.54){\color[rgb]{0,0,0}\makebox(0,0)[lb]{\smash{Time step comp. and}}}%
    \put(0.34,0.5){\color[rgb]{0,0,0}\makebox(0,0)[lb]{\smash{Waypoints generation}}}%
    \put(0.35,0.42){\color[rgb]{0,0,0}\makebox(0,0)[lb]{\smash{Constraints and Obj.  }}}%    
    \put(0.37,0.38){\color[rgb]{0,0,0}\makebox(0,0)[lb]{\smash{Func. formulation}}}%    
    \put(0.42,0.3){\color[rgb]{0,0,0}\makebox(0,0)[lb]{\smash{Solve  \textit{QP}}}}%    
    \put(0.36,0.215){\color[rgb]{0,0,0}\makebox(0,0)[lb]{\smash{Feasible Trajectory}}}%    
    \put(0.40,0.175){\color[rgb]{0,0,0}\makebox(0,0)[lb]{\smash{$p(t)$, $v(t)$, $a(t)$}}}%
    \put(0.47,0.09){\color[rgb]{0,0,0}\makebox(0,0)[lb]{\smash{is}}}%
    \put(0.4,0.05){\color[rgb]{0,0,0}\makebox(0,0)[lb]{\smash{$x(\bar{t})=\xgoal$?}}}%    
    \put(0.28,0.08){\color[rgb]{0,0,0}\makebox(0,0)[lb]{\smash{YES}}}%    
    \put(0.62,0.08){\color[rgb]{0,0,0}\makebox(0,0)[lb]{\smash{NO}}}%    
    \put(0.15,0.06){\color[rgb]{0,0,0}\makebox(0,0)[lb]{\smash{END}}}%
    \put(0.08,0.49){\color[rgb]{0,0,0}\makebox(0,0)[lb]{\smash{Dynamical}}}%        
    \put(0.08,0.44){\color[rgb]{0,0,0}\makebox(0,0)[lb]{\smash{constraints}}}%
    \put(0.795,0.47){\color[rgb]{0,0,0}\makebox(0,0)[lb]{\smash{Cond.}}}%
    \put(0.755,0.44){\color[rgb]{0,0,0}\makebox(0,0)[lb]{\smash{to replan in}}}%
    \put(0.765,0.4){\color[rgb]{0,0,0}\makebox(0,0)[lb]{\smash{$[\bar{t}+t_s,t_f]$?}}}%    
    \put(0.78,0.55){\color[rgb]{0,0,0}\makebox(0,0)[lb]{\smash{NO}}}%
    \put(0.93,0.48){\color[rgb]{0,0,0}\makebox(0,0)[lb]{\smash{YES}}}%    
    \put(0.7,0.83){\color[rgb]{0,0,0}\makebox(0,0)[lb]{\smash{$\xstart\leftarrow x(\bar{t}+t_s)$}}}%
	\put(0.7,0.79){\color[rgb]{0,0,0}\makebox(0,0)[lb]{\smash{$\xgoal\leftarrow \hat{x}_\textrm{goal}$}}}%        
	\put(0.5,0.47){\color[rgb]{0,0,0}\makebox(0,0)[lb]{\smash{$t_f$}}}%
    \put(0.74,0.08){\color[rgb]{0,0,0}\makebox(0,0)[lb]{\smash{$\bar{t}\leftarrow t$}}}%
    \put(0.70,0.22){\color[rgb]{0,0,0}\makebox(0,0)[lb]{\smash{Commit. Trajectory}}}%
    \put(0.75,0.18){\color[rgb]{0,0,0}\makebox(0,0)[lb]{\smash{for $[\bar{t},\bar{t}+t_s]$ }}}%
  	\put(0.04,0.56){\color[rgb]{0,0,0}\makebox(0,0)[lb]{\smash{\textit{QP} Formulation}}}%
  	\put(0.04,0.74){\color[rgb]{0,0,0}\makebox(0,0)[lb]{\smash{Sampling-based}}}%
  	\put(0.04,0.71){\color[rgb]{0,0,0}\makebox(0,0)[lb]{\smash{Planning}}}%
  }
    \put(0.02,0.0){\includegraphics[width=\unitlength,page=1]{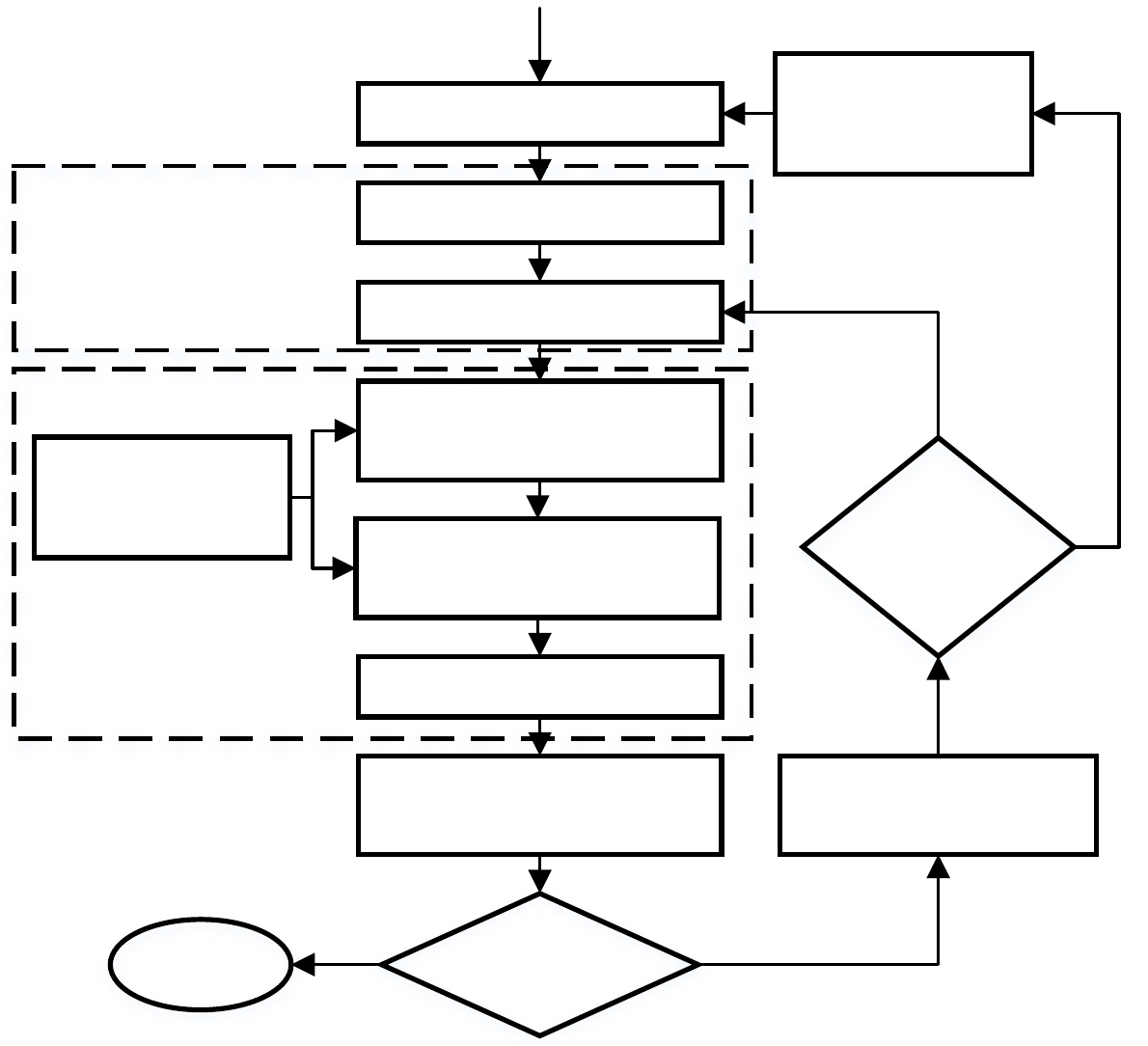}}%
  \end{picture}%
\endgroup%
\end{center}
\caption{Sketch of the proposed hybrid method for trajectory generation.}
\label{Fig:Approach}
\end{figure}

\begin{problem}
Consider a robot whose centroid is modeled as a free particle with dynamics described by $\dot{p}(t)=v(t)$, $\dot{v}(t)=a(t)$ where $p,v,a\in\mathbb{R}^d$ and $\|a(t)\|_\infty\leq\amax$. Assuming knowledge of the environment (i.e. the obstacle region $\chi_{\mathrm{obsReal}}$ is known) and given the initial and final states, 
 $\xstart=\left[ \pstart^T \ \vstart^T \ \astart^T\right]^T\in\mathcal{X}_{\mathrm{allowed}}$ and $\xgoal=\left[ \pgoal^T \ \vgoal^T \ \agoal^T\right]^T\in\mathcal{X}_{\mathrm{allowed}}$, respectively, find a trajectory and a time $t_f$ such that, 
$x(0)=\xstart$, $x(t_f)=\xgoal$ and for all time $t\in[0,t_f]$, $p(t)\in\chi_{\mathrm{Free}}$ and $x(t)\in\mathcal{X}_{\mathrm{allowed}}$. 
\label{ProbPlanning}
\end{problem}

\begin{figure*}
    \centering
    \subfloat[Generating a high-clearance obstacle-free path]{\label{subfig:Path}%\label{subfig:Inflating}
    \def\svgwidth{5.95cm}
\begingroup%
  \makeatletter%
  \providecommand\color[2][]{%
    \errmessage{(Inkscape) Color is used for the text in Inkscape, but the package 'color.sty' is not loaded}%
    \renewcommand\color[2][]{}%
  }%
  \providecommand\transparent[1]{%
    \errmessage{(Inkscape) Transparency is used (non-zero) for the text in Inkscape, but the package 'transparent.sty' is not loaded}%
    \renewcommand\transparent[1]{}%
  }%
  \providecommand\rotatebox[2]{#2}%
  \ifx\svgwidth\undefined%
    \setlength{\unitlength}{170.07874016bp}%
    \ifx\svgscale\undefined%
      \relax%
    \else%
      \setlength{\unitlength}{\unitlength * \real{\svgscale}}%
    \fi%
  \else%
    \setlength{\unitlength}{\svgwidth}%
  \fi%
  \global\let\svgwidth\undefined%
  \global\let\svgscale\undefined%
  \makeatother%
  \begin{picture}(1,0.58333333)%  
      \put(0,0){\includegraphics[width=\unitlength]{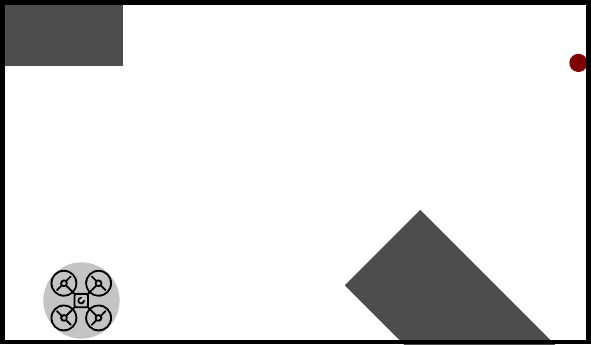}}%
      \put(0,0){\includegraphics[width=\unitlength]{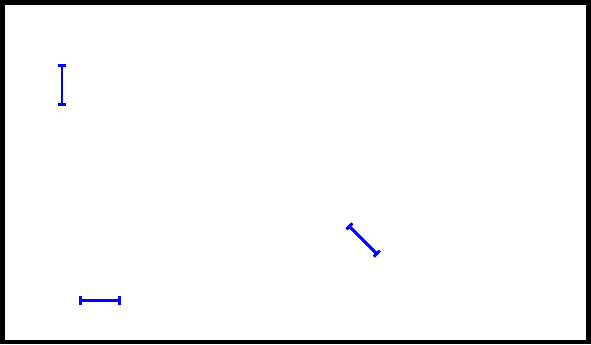}}%
      \put(0,0){\includegraphics[width=\unitlength]{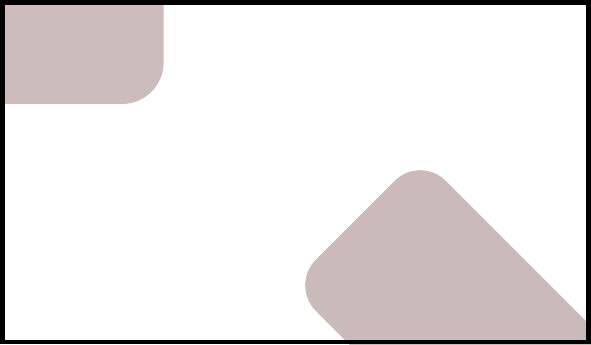}}%  
      \put(0,0){\includegraphics[width=\unitlength]{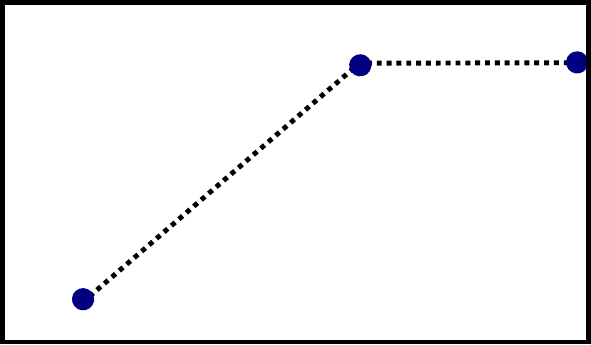}}%  
      \put(0,0){\includegraphics[width=\unitlength]{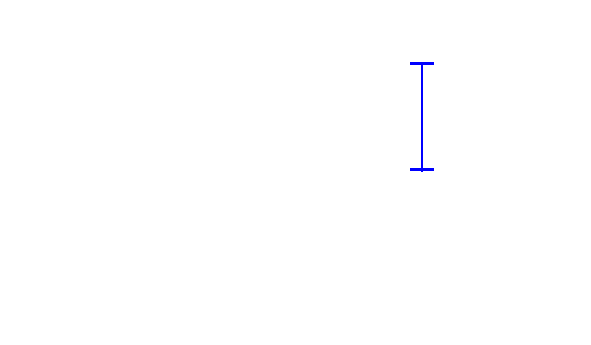}}%                  
  \footnotesize{
    \put(0.20420342,0.06397075){\color[rgb]{0,0,0}\makebox(0,0)[lb]{\smash{$r$}}}%
    \put(0.005,0.07){\color[rgb]{0,0,0}\makebox(0,0)[lb]{\smash{$\pstart$}}}%
    \put(0.03,0.1){\color[rgb]{0,0,0}\makebox(0,0)[lb]{\smash{$\mathcal{B}$}}}%
    \put(0.89,0.43088928){\color[rgb]{0,0,0}\makebox(0,0)[lb]{\smash{$\pgoal$}}}%
    \put(0.08,0.42){\color[rgb]{0,0,0}\makebox(0,0)[lb]{\smash{$r$}}}%
    \put(0.10,0.35){\color[rgb]{0,0,0}\makebox(0,0)[lb]{\smash{$\chi_{\mathrm{Free}}$}}}
    \put(0.66,0.20){\color[rgb]{0,0,0}\rotatebox{45}{\makebox(0,0)[lb]{\smash{$r$}}}}%    
    \put(0.03,0.52){\color[rgb]{1,1,1}\rotatebox{0}{\makebox(0,0)[lb]{\smash{$\chi_{\mathrm{obsReal}}$}}}}%
    \put(0.65,0.13){\color[rgb]{1,1,1}\rotatebox{-45}{\makebox(0,0)[lb]{\smash{$\chi_{\mathrm{obsReal}}$}}}}%
    \put(0.08,0.16){\color[rgb]{0,0,0}\makebox(0,0)[lb]{\smash{$\eta[0]$}}}%
    \put(0.54,0.51){\color[rgb]{0,0,0}\makebox(0,0)[lb]{\smash{$\eta[1]$}}}%
    \put(0.88,0.52){\color[rgb]{0,0,0}\makebox(0,0)[lb]{\smash{$\eta[2]$}}}%
    \put(0.35,0.27){\color[rgb]{0,0,0}\rotatebox{45}{\makebox(0,0)[lb]{\smash{$\eta(\mathbf{s})$}}}}%
    \put(0.73,0.38){\color[rgb]{0,0,0}\makebox(0,0)[lb]{\smash{$\lmin$}}}%
    }  \end{picture}%
\endgroup%    
    }
    \hspace{-0.38cm}
    \subfloat[Generation of time-indexed waypoints $\varpi$]{\label{subfig:Waypoints}
    \def\svgwidth{5.95cm}
\begingroup%
  \makeatletter%
  \providecommand\color[2][]{%
    \errmessage{(Inkscape) Color is used for the text in Inkscape, but the package 'color.sty' is not loaded}%
    \renewcommand\color[2][]{}%
  }%
  \providecommand\transparent[1]{%
    \errmessage{(Inkscape) Transparency is used (non-zero) for the text in Inkscape, but the package 'transparent.sty' is not loaded}%
    \renewcommand\transparent[1]{}%
  }%
  \providecommand\rotatebox[2]{#2}%
  \ifx\svgwidth\undefined%
    \setlength{\unitlength}{170.07874016bp}%
    \ifx\svgscale\undefined%
      \relax%
    \else%
      \setlength{\unitlength}{\unitlength * \real{\svgscale}}%
    \fi%
  \else%
    \setlength{\unitlength}{\svgwidth}%
  \fi%
  \global\let\svgwidth\undefined%
  \global\let\svgscale\undefined%
  \makeatother%
  \begin{picture}(1,0.58333333)%
      \put(0,0){\includegraphics[width=\unitlength]{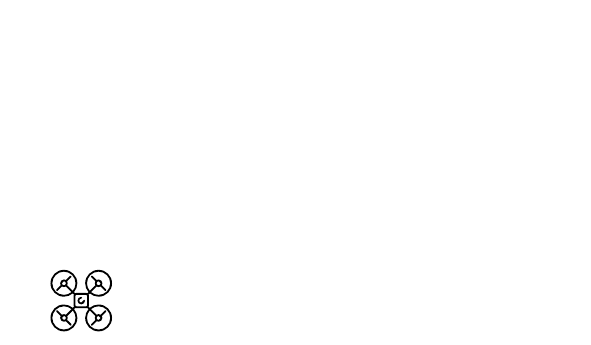}}%      
      \put(0,0){\includegraphics[width=\unitlength]{Images/Algorithm2.pdf}}% 
      \put(0,0){\includegraphics[width=\unitlength]{Images/Algorithm5.pdf}}%
      \put(0,0){\includegraphics[width=\unitlength]{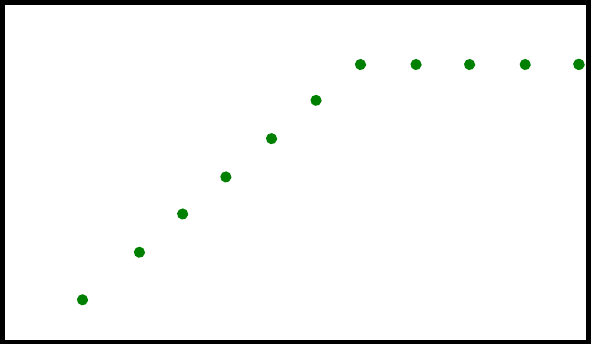}}%            
      \put(0,0){\includegraphics[width=\unitlength]{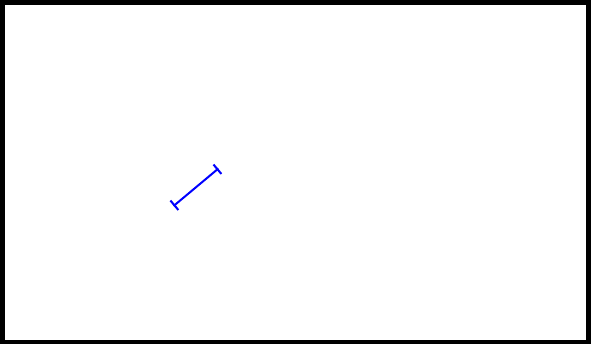}}%               
    \footnotesize{
    \put(0.65851852,0.49388888){\color[rgb]{0,0,0}\makebox(0,0)[lb]{\smash{}}}%
    \put(0.2,0.05){\color[rgb]{0,0,0}\makebox(0,0)[lb]{\smash{$\varpi[0],\varpi[1]$}}}%
    \put(0.56,0.4){\color[rgb]{0,0,0}\makebox(0,0)[lb]{\smash{$\varpi[k-1]$}}}%
    \put(0.40,0.49997858){\color[rgb]{0,0,0}\makebox(0,0)[lb]{\smash{$\varpi[k],\varpi[k+1]$}}}%
    \put(0.24,0.12){\color[rgb]{0,0,0}\makebox(0,0)[lb]{\smash{$\varpi[2]$}}}%
    \put(0.31501308,0.26){\color[rgb]{0,0,0}\rotatebox{44.15706771}{\makebox(0,0)[lb]{\smash{$\ell$}}}}%
    
    \put(0.1,0.5){\color[rgb]{0,0,0}\makebox(0,0)[lb]{\smash{$\chi_{\mathrm{obs}}$}}}%
    \put(0.7,0.1){\color[rgb]{0,0,0}\makebox(0,0)[lb]{\smash{$\chi_{\mathrm{obs}}$}}}%
    }
  \end{picture}%
\endgroup%    
    }\hspace{-0.38cm}
    \vspace{-0.35cm}
    \subfloat[Generation of soft constraints $\Omega$ for $\varpi$]{\label{subfig:Regions}
    \def\svgwidth{5.95cm}
\begingroup%
  \makeatletter%
  \providecommand\color[2][]{%
    \errmessage{(Inkscape) Color is used for the text in Inkscape, but the package 'color.sty' is not loaded}%
    \renewcommand\color[2][]{}%
  }%
  \providecommand\transparent[1]{%
    \errmessage{(Inkscape) Transparency is used (non-zero) for the text in Inkscape, but the package 'transparent.sty' is not loaded}%
    \renewcommand\transparent[1]{}%
  }%
  \providecommand\rotatebox[2]{#2}%
  \ifx\svgwidth\undefined%
    \setlength{\unitlength}{170.07874016bp}%
    \ifx\svgscale\undefined%
      \relax%
    \else%
      \setlength{\unitlength}{\unitlength * \real{\svgscale}}%
    \fi%
  \else%
    \setlength{\unitlength}{\svgwidth}%
  \fi%
  \global\let\svgwidth\undefined%
  \global\let\svgscale\undefined%
  \makeatother%
  \begin{picture}(1,0.58333333)%
      \put(0,0){\includegraphics[width=\unitlength]{Images/Algorithm12.pdf}}%  
      \put(0,0){\includegraphics[width=\unitlength]{Images/Algorithm2.pdf}}%
      \put(0,0){\includegraphics[width=\unitlength]{Images/Algorithm6.pdf}}%      
      \put(0,0){\includegraphics[width=\unitlength]{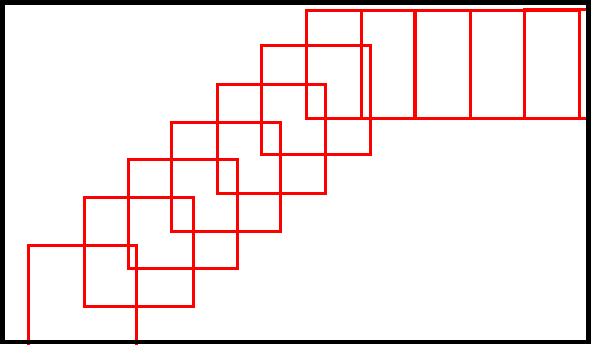}}%      
      \put(0,0){\includegraphics[width=\unitlength]{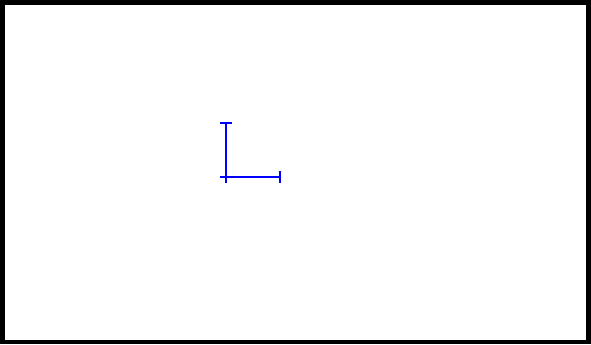}}%      
    \footnotesize{
    \put(0.42,0.29){\color[rgb]{0,0,0}\makebox(0,0)[lb]{\smash{$\ell$}}}%
    \put(0.39,0.32){\color[rgb]{0,0,0}\makebox(0,0)[lb]{\smash{$\ell$}}}%    
    \put(0.25,0.02){\color[rgb]{0,0,0}\makebox(0,0)[lb]{\smash{$\Omega[0],\Omega[1]$}}}%
    \put(0.24,0.08){\color[rgb]{0,0,0}\makebox(0,0)[lb]{\smash{$\Omega[2]$}}}%
    \put(0.525,0.52){\color[rgb]{0,0,0}\makebox(0,0)[lb]{\smash{$\Omega[k]$}}}%
    }
  \end{picture}%
\endgroup%    
    }\hspace{-0.38cm}
    \subfloat[Trajectory Solution, ${p[k]\in\Omega[k]}$]{\label{subfig:TrajectoryD}
    \def\svgwidth{5.95cm}
\begingroup%
  \makeatletter%
  \providecommand\color[2][]{%
    \errmessage{(Inkscape) Color is used for the text in Inkscape, but the package 'color.sty' is not loaded}%
    \renewcommand\color[2][]{}%
  }%
  \providecommand\transparent[1]{%
    \errmessage{(Inkscape) Transparency is used (non-zero) for the text in Inkscape, but the package 'transparent.sty' is not loaded}%
    \renewcommand\transparent[1]{}%
  }%
  \providecommand\rotatebox[2]{#2}%
  \ifx\svgwidth\undefined%
    \setlength{\unitlength}{170.07874016bp}%
    \ifx\svgscale\undefined%
      \relax%
    \else%
      \setlength{\unitlength}{\unitlength * \real{\svgscale}}%
    \fi%
  \else%
    \setlength{\unitlength}{\svgwidth}%
  \fi%
  \global\let\svgwidth\undefined%
  \global\let\svgscale\undefined%
  \makeatother%
  \begin{picture}(1,0.58333333)%
      \put(0,0){\includegraphics[width=\unitlength]{Images/Algorithm2.pdf}}% 
      \put(0,0){\includegraphics[width=\unitlength]{Images/Algorithm12.pdf}}%
      \put(0,0){\includegraphics[width=\unitlength]{Images/Algorithm8.pdf}}%      
      \put(0,0){\includegraphics[width=\unitlength]{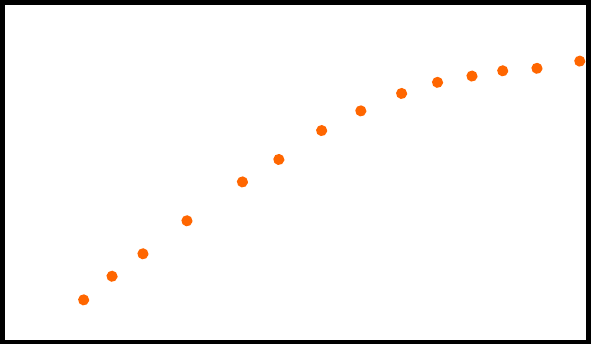}}%      
    \footnotesize{
    \put(0.52,0.40){\color[rgb]{0,0,0}\makebox(0,0)[lb]{\smash{$p[k]$}}}%
    \put(0.58,0.45){\color[rgb]{0,0,0}\makebox(0,0)[lb]{\smash{$p[k+1]$}}}%
    }
  \end{picture}%
\endgroup%    
    }\hspace{-0.38cm}
    \subfloat[Guaranteed Collision-Free, $p(t)\in\chi_{\mathrm{Free}}$]{\label{subfig:TrajectoryC}
    \def\svgwidth{5.95cm}
\begingroup%
  \makeatletter%
  \providecommand\color[2][]{%
    \errmessage{(Inkscape) Color is used for the text in Inkscape, but the package 'color.sty' is not loaded}%
    \renewcommand\color[2][]{}%
  }%
  \providecommand\transparent[1]{%
    \errmessage{(Inkscape) Transparency is used (non-zero) for the text in Inkscape, but the package 'transparent.sty' is not loaded}%
    \renewcommand\transparent[1]{}%
  }%
  \providecommand\rotatebox[2]{#2}%
  \ifx\svgwidth\undefined%
    \setlength{\unitlength}{170.07874016bp}%
    \ifx\svgscale\undefined%
      \relax%
    \else%
      \setlength{\unitlength}{\unitlength * \real{\svgscale}}%
    \fi%
  \else%
    \setlength{\unitlength}{\svgwidth}%
  \fi%
  \global\let\svgwidth\undefined%
  \global\let\svgscale\undefined%
  \makeatother%
  \begin{picture}(1,0.58333333)%
  \footnotesize{
      \put(0,0){\includegraphics[width=\unitlength]{Images/Algorithm0.pdf}}%
      \put(0,0){\includegraphics[width=\unitlength]{Images/Algorithm2.pdf}}%  
      \put(0,0){\includegraphics[width=\unitlength]{Images/Algorithm10.pdf}}%
      \put(0,0){\includegraphics[width=\unitlength]{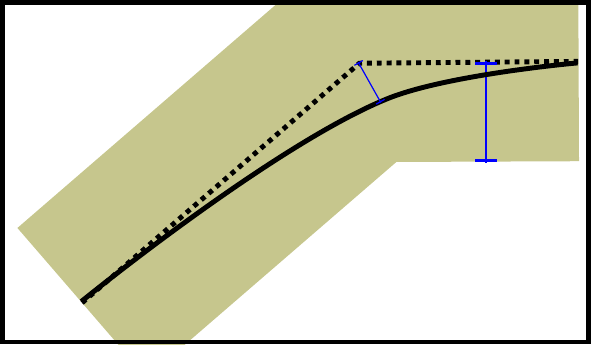}}%
    \put(0.22,0.08){\color[rgb]{0,0,0}\rotatebox{32}{\makebox(0,0)[lb]{\smash{Collision-Free Trajectory $p(t)$}}}}%
    \put(0.23,.17){\color[rgb]{0,0,0}\rotatebox{42}{\makebox(0,0)[lb]{\smash{Obstacle-Free Path $\eta(\mathbf{s})$}}}}%
    \put(0.84,0.37){\color[rgb]{0,0,0}\rotatebox{0}{\makebox(0,0)[lb]{\smash{$\frac{3}{2}\ell\sqrt{d}$}}}}%    
    \put(0.65,.42){\color[rgb]{0,0,0}\rotatebox{30}{\makebox(0,0)[lb]{\smash{$b(t)$}}}}%    
    }
  \end{picture}%
\endgroup%    
    }
    \caption{Illustration of the proposed hybrid method for trajectory planning.}
    \label{fig:TrajPlan}
\end{figure*}

Modeling the center of the robot as a free particle is very effective, for instance, for the trajectory planning of quadcopters, since it has being demonstrated that smooth trajectories for each coordinate can be treated as independent reference outputs due to the differentially flat property~\cite{Loianno2017}.

Our approach to solve Problem~\ref{ProbPlanning} is depicted in~Fig.~\ref{Fig:Approach} and Fig.~\ref{fig:TrajPlan}. The first step, illustrated in~Fig.~\ref{subfig:Path}, is to generate an obstacle-free path $\eta(\mathbf{s})\subset\chi_{\mathrm{Free}}$ connecting $\pstart$ with $\pgoal$; where $\eta(\mathbf{s})\subset\chi_{\mathrm{Free}}$. High-clearance paths are desired in order to avoid over-constraining the optimization problem. Hence, a sampling-based path planning technique capable of efficiently generating such paths should be used (e.g.~\cite{Jaillet2010,Devaurs2016}). The minimum distance $\lmin$ from the path $\eta(\mathbf{s})$ to the obstacle region $\chi_\mathrm{obs}$ is used to set a design parameter $\ell$.
If a minimum separation $\lmin$ is required then the obstacles can be further inflated by $\lmin$ before the path planning algorithm is run.

The next step is to generate a series of time-indexed waypoints $\varpi[k]$ with associated hypercube regions $\Omega[k]$ of edge length $2\ell$, as shown in Fig.~\ref{subfig:Waypoints} and Fig.~\ref{subfig:Regions}. These regions will be used later to introduce soft constraints in the formulation of the optimization problem. Note that in addition to the waypoints inserted at regular intervals along the path, an extra waypoint is added in each node $\eta[s]$. While these extra waypoints may be redundant in some cases, i.e., where the adjacent path segments don't present a drastic change of direction, they are used to guarantee existence of a trajectory solution. As illustrated in Fig.~\ref{subfig:TrajectoryD}, it is required that such a solution satisfies $p[k]=p(hk)\in\Omega[k]$; where the time step $h$ is chosen as a function of $\ell$ and the maximum acceleration $A_{\mathrm{max}}$ of the robot. Let $K+1$ be the number of waypoints $\varpi$, then $t_f=Kh$ is the total time of the trajectory.

It is shown that following our formulation, the posed quadratic optimization problem is guaranteed to be feasible.
%Finally, it is guaranteed that $p[k]\in\chi_{\mathrm{Free}}$. 
Furthermore, let the separation $b(t)$ between $\eta(\mathbf{s})$ and $p(t)$ be given by 
\begin{align*}
b(t)=\min\{\|p(t)-\lambda (\eta[s+1]-&\eta[s])+\eta[s]\}\|_2:\\
&\lambda\in[0,1], s\in\{0,\ldots,S-1\} \}.    
\end{align*}
The proposed method ensures that $b(t)$ is bounded by $\frac{3}{2}\ell\sqrt{d}$, then $p(t)\in\chi_{\mathrm{Free}}$ for all $t\in[0,t_f]$, as illustrated in~Fig.~\ref{subfig:TrajectoryC}. This is also an important contribution since previous proposals reported in the literature either do not guarantee to be collision-free, see e.g.~\cite{Kalakrishnan2011,Zucker2013} or only guarantee that a discrete trajectory generated by the \textit{QP} formulation is collision-free but not the interpolated trajectory that is actually executed by the robot, see e.g.~\cite{Augugliaro2012}. Moreover, they require iterative \textit{QP} formulations with decreasing sampling time.

In many important situations, it is necessary to compute a new trajectory before the robot has completed its current plan. For example, a path that was previously open may have become blocked, or a new task may require the robot to move towards a different final destination. Hence, as depicted in~Fig.~\ref{Fig:Approach} while the robot is executing the last solution of Problem~\ref{ProbPlanning} it should continuously check if a change of plan is required. 
Let $\bar{t}$ be the current time. If the conditions to re-plan are met then a new optimization problem is formulated with initial state $\xstart\leftarrow x(\bar{t}+t_s)$ and final conditions $\xgoal$, and a new trajectory is generated as described above. Otherwise, the trajectory in the interval $[\bar{t},\bar{t}+t_s]$ is committed to the trajectory tracking controller while the path related to the interval $[\bar{t}+t_s,t_f]$ is further refined, for a limited number of iterations, in search of improving the cost or the clearance of the solution, using asymptotically optimal methods~\cite{Karaman2011}.

\section{Preliminaries: Sampling-based Path Planning}\label{sec:path_planing}

Sampling based, or stochastic search methods to generate obstacle-free paths~\cite{Gammell2015,Choudhury2016,Elbanhawi2014}, are popular due to their effectiveness at finding traversable paths while avoiding discretization of the state space. In this paper, collision-free path generation is illustrated using Informed Optimal Rapidly-exploring Random Trees (\textit{IRRT*})~\cite{Gammell2014}. This algorithm improves the current solution iteratively via incremental rewiring, converging asymptotically to an optimal solution. While any other technique to generate a path could have been used, such as \textit{BIT*}~\cite{Gammell2015} or Regionally Accelerated \textit{BIT*} (\textit{RABIT*})~\cite{Choudhury2016}, the \textit{IRRT*} algorithm was selected because it allows anytime planning by limiting its solutions~\cite{Karaman2011Anytime}. A brief review of the \textit{IRRT*} algorithm~\cite{Gammell2014} is presented in this section.

Assume that the initial position $\pstart \in \chi_\mathrm{free}$ and the final position $\pgoal \in \chi_\mathrm{free}$ are given. The aim is to find a path $\eta^*$, that minimizes a given cost function $c:\Sigma(\eta)\mapsto \mathbb{R}_{\geq0}$, where $\Sigma(\eta)$ is the set of all feasible paths, i.e.
$\eta^* = \operatorname*{arg\,min}_{\eta\in\Sigma(\eta)} c(\eta).$ Following the standard \textit{RRT*} algorithm~\cite{Karaman2011,Elbanhawi2014} a new random node in $\chi_{\mathrm{Free}}$ is sampled and its neighbor nodes within a ball of radius $r_{\mathrm{RRT}^*}$ are rewired if this provides a path with lower cost function, where
$r_{\mathrm{RRT}^*} = \gamma_{\mathrm{RRT}^*}\left (\frac{\log(\delta )}{\delta} \right )^\frac{1}{d}$; $\delta$ is the number of states in the tree and $\gamma_{\mathrm{RRT}^*}$ an appropriate constant given in~\cite{Karaman2011}. This paper seeks to minimize the path length in $\mathbb{R}^d$, so the cost function $c$ is based on the Euclidean distance.

After a path has been obtained using \textit{RRT*}, the \textit{IRRT*} algorithm allows increasing the probability of reducing the cost of the solution in subsequent rounds of \textit{RRT*} by restricting the sampling region. In~\cite{Gammell2014} it is shown that the new nodes that may improve the cost of the path are necessarily contained in the ellipse defined by
\begin{align*}%\label{eq:euclideandist}
\chi_\mathrm{inform} = \{\mathrm{x} \in \chi \quad : \quad \|\pstart - \mathrm{x}\|_2+\|\mathrm{x} - \pgoal\|_2 \leq c_\mathrm{best}\}, 
\end{align*}
where $c_\mathrm{best}$ is the cost of the current solution. New nodes sampled from $\chi_\mathrm{inform}$ are generated by taking random samples $\mathrm{x_{ball}}$ from a unitary $d$-ball and then mapped to $\chi_\mathrm{inform}$ using
$\mathrm{\mathrm{x}_\mathrm{ellipse}} = \mathbf{CL}\mathrm{x_{ball}}+\mathrm{x_{centre}}$, where \begin{align*}%\label{eq:Lmatrix}
\mathbf{L} = \mathrm{diag}\bigg\{ 
\frac{c_\mathrm{best}}{2},\overbrace{\frac{(c^2_\mathrm{best}-c^2_\mathrm{min})^{1/2}}{2},\cdots,\frac{(c^2_\mathrm{best}-c^2_\mathrm{min})^{1/2}}{2}}^{d-1}\bigg\},
\end{align*} 
is a diagonal matrix, $\mathbf{C} \in SO(d)$ is the rotation between the origin of the $d$-dimensional Euclidean space and the vector $\pgoal - \pstart$ and $\mathrm{x_{centre}}=(\pgoal+\pstart)/2$.

\section{Optimization-based Trajectory Planning}\label{sec:modelbased_optimization}

The \textit{QP} formulation derived in this section integrates information from the obstacle-free path $\eta(\mathbf{s})$ together with knowledge of the dynamical constraints of the robot. The problem is posed in such a way that the need for iterative solutions is avoided and the resulting trajectory is guaranteed to be collision-free, i.e. $\forall t$, $p(t)\in\chi_{\mathrm{Free}}$. 

To guide the solution of the \textit{QP} and to ensure that the separation $b(t)$ between the planned path $\eta(\mathbf{s})$ and the final trajectory $p(t)$ remains adequately bounded, $K+1$ time-indexed waypoints $\varpi[k]$ are placed along the path $\eta(\mathbf{s})$. Each waypoint $\varpi[k]$ has an associated set of soft constraints defining a hypercube region $\Omega[k]$, as illustrated in Fig.~\ref{subfig:Regions}. These constraints lead to $p[k]=p(kh)\in\Omega[k]$, $k=0,\ldots,K$. The time step $h$ and the maximum velocity $V_\mathrm{max}$ are obtained based on the value of the parameter $\ell$ and the maximum acceleration $A_\mathrm{max}$. This method ensures that a trajectory $p(t)$ can always be found which travels from one hypercube region to the next in time $h$ while the separation $b(t)$ remains bounded by $\frac{3}{2}\ell\sqrt{d}$, as illustrated in Fig.~\ref{Fig:Lemmas}. Since there is a minimum clearance of $\lmin$, this property guarantees a collision-free trajectory. In the following, this procedure is explained in greater detail.

\begin{figure}
\begin{center}
	\def\svgwidth{7cm} 
\begingroup%
  \makeatletter%
  \providecommand\color[2][]{%
    \errmessage{(Inkscape) Color is used for the text in Inkscape, but the package 'color.sty' is not loaded}%
    \renewcommand\color[2][]{}%
  }%
  \providecommand\transparent[1]{%
    \errmessage{(Inkscape) Transparency is used (non-zero) for the text in Inkscape, but the package 'transparent.sty' is not loaded}%
    \renewcommand\transparent[1]{}%
  }%
  \providecommand\rotatebox[2]{#2}%
  \ifx\svgwidth\undefined%
    \setlength{\unitlength}{129.46928129bp}%
    \ifx\svgscale\undefined%
      \relax%
    \else%
      \setlength{\unitlength}{\unitlength * \real{\svgscale}}%
    \fi%
  \else%
    \setlength{\unitlength}{\svgwidth}%
  \fi%
  \global\let\svgwidth\undefined%
  \global\let\svgscale\undefined%
  \makeatother%
  \begin{picture}(1,0.66186112)%
    \put(0,0){\includegraphics[width=\unitlength]{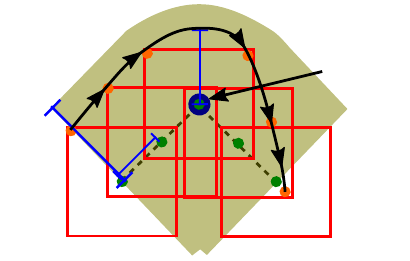}}%
    \footnotesize{
    \put(0.25,0.27){\color[rgb]{0,0,0}\rotatebox{45}{\makebox(0,0)[lb]{\smash{$\frac{2}{3}\ell\sqrt{d}$}}}}%
    \put(0.32,0.26){\color[rgb]{0,0,0}\rotatebox{45}{\makebox(0,0)[lb]{\smash{$\ell$}}}}%
    \put(0.79,0.45){\color[rgb]{0,0,0}\makebox(0,0)[lb]{\smash{$\varpi[k],\varpi[k+1]$}}}%
    \put(0.5,0.45){\color[rgb]{0,0,0}\makebox(0,0)[lb]{\smash{$b(t)$}}}%
    \put(0.47,0.58){\color[rgb]{0,0,0}\makebox(0,0)[lb]{\smash{$p(t)$}}}%    
    \put(0.33,0.54){\color[rgb]{0,0,0}\makebox(0,0)[lb]{\smash{$p[k]$}}}%
    \put(0.74,0.15){\color[rgb]{0,0,0}\makebox(0,0)[lb]{\smash{$p[k+3]$}}}%    
    \put(0.05,0.3){\color[rgb]{0,0,0}\makebox(0,0)[lb]{\smash{$p[k-2]$}}}%
    \put(0.62,0.53){\color[rgb]{0,0,0}\makebox(0,0)[lb]{\smash{$p[k+1]$}}}%    
    \put(0.2,0.07){\color[rgb]{0,0,0}\makebox(0,0)[lb]{\smash{$\Omega[k-2]$}}}%
    \put(0.15,0.18){\color[rgb]{0,0,0}\makebox(0,0)[lb]{\smash{$\varpi[k-2]$}}}%    
    \put(0.68,0.07){\color[rgb]{0,0,0}\makebox(0,0)[lb]{\smash{$\Omega[k+3]$}}}%  
    \put(0.72,0.18){\color[rgb]{0,0,0}\makebox(0,0)[lb]{\smash{$\varpi[k+3]$}}}%        
    }
  \end{picture}%
\endgroup%	
\end{center}
\caption{
If $V_{\mathrm{max}}$ and $h$ are a solution to Problem~\ref{Prob:h}, then the robot takes time $h$ to navigate from one point in the discrete trajectory to the next (i.e. $p(kh)=p[k]$, $k=0,\ldots,K$) while the separation $b(t)$ between the resulting trajectory $p(t)$ and the continuous path $\eta(\mathrm{s})$ is bounded by $\frac{2}{3}\ell\sqrt{d}$.
}
\label{Fig:Lemmas}
\end{figure}

\subsection{Problem constraints formulation}
\label{Sec:Constrains}

Let $\kappa_0=\kappa_S=0$ and $\kappa_{s}=\left\lceil \frac{1}{\ell} \|\eta[s+1]-\eta[s]\|_2 \right\rceil$, $s\in\{1,\ldots,S-1\}$. A sequence of waypoints are obtained such that  $\varpi_0[\kappa_0]=\eta[0]$, $\varpi_S[\kappa_S]=\eta[S]$ and $
\varpi_s\left[\textup{i}\right]=\frac{\textup{i}}{\kappa_{s}}\left(\eta[s+1]-\eta[s]\right)+\eta[s]
$, $\textup{i}\in\{0,\ldots, \kappa_s\}$. The time-indexed waypoint $\varpi[k]$ is obtained as the $(k-1)-$th element of the sequence
\begin{align*}
\varpi=\{\varpi_0[\kappa_0],\varpi_1[0],\ldots,&\varpi_1[\kappa_1],\ldots,\\
&\varpi_{S-1}[0],\ldots,\varpi_{S-1}[\kappa_{S-1}],\varpi_S[\kappa_S]\}.
\end{align*}
The generation of time-indexed waypoints is illustrated in~Fig.~\ref{subfig:Waypoints}; notice that $\varpi_{s-1}[\kappa_{s-1}]=\varpi_{s}[0]=\eta[s]$ or equivalently $\varpi[k] = \varpi[k+1] = \eta[s]$ for some $k$. Thus, in addition to the generation of waypoints in-between two consecutive nodes, this procedure also adds two waypoints at each node $\eta[s]$ of the path $\eta$, as illustrated in Fig.~\ref{subfig:Waypoints} and Fig.~\ref{Fig:Lemmas} . Furthermore, as illustrated in~Fig.~\ref{subfig:Regions}, each waypoint $\varpi[k]$ has an associated hypercube of edge length $2\ell$ defined by
\begin{equation}
\Omega[k]=\{\rho: \|\rho-\varpi[k]\|_\infty\leq \ell \}.
\label{Eq:WayPointRegion}
\end{equation}

The $\Omega[k]$ regions are used as constraints on the convex optimization problem restricting a solution of the trajectory $p(t)$ to be one satisfying $p[k]=p(kh)\in\Omega[k]$, $k\in\{0,\ldots,K\}$, (or equivalently $\|\varpi[k]-p[k]\|_\infty\leq \ell$) as illustrated in~Fig.~\ref{subfig:TrajectoryD}; where $h$ is the time step hereinafter derived and $K=|\varpi|-1$. Hence, if such constraints are satisfied, the total time of the trajectory along the path $\eta$ is $t_f=Kh$.

\begin{problem}
\label{Prob:h}
Based on the design parameter $\ell$, the maximum acceleration $A_{\mathrm{max}}$ and taking into account particle kinematics for motion with constant acceleration $\|a[k]\|_\infty \leq A_{\mathrm{max}}$ in $t\in(kh,(k+1)h] $, $k\in{0,\cdots,K}$, find $\vmax$ and $h$ such that the following conditions are satisfied:
\begin{enumerate}[i)]
\item \label{Cond_1} Let $\|\varpi[k+1]-\varpi[k]\|_2=\ell$ and assume that $p[k] \in \Omega[k]$ and $v[k]$ satisfies $\| v[k]\|_\infty \leq \vmax$ and $v[k]^\trans(\varpi[k+1]-\varpi[k])\geq 0$. Then, there exists a constant acceleration $ \|a[k]\|_\infty\leq\amax$ such that $ p[k+1] \in \Omega[k+1]$, $ \| v[k+1]\|_\infty\leq \vmax$ and $ v[k+1]^\trans(\varpi[k+1]-\varpi[k])\geq 0$.
\item \label{Cond_2} Let $\varpi[k]=\varpi[k+1]=\eta[s]$ and assumme that $ p[k] \in \Omega[k] $, and $\|v[k]\|_\infty \leq \vmax$. Then, there exists a pair of constant accelerations  $\|a[k]\|_\infty\leq\amax$ and  $\|a[k+1]\|_\infty\leq\amax$,  such that $ p[k+1] \in \Omega[k+1] $, $ \|v[k+1]\|_\infty \leq\vmax$, $ p[k+2] \in \Omega[k+2]$ and $ \|v[k+2]\|_\infty\leq\vmax$ with $v[k+2]^\trans(\varpi[k+2]-\varpi[k+1])\geq 0$.
\item \label{Cond_3} For $p[k]\in\Omega[k]$ and $\|v[k]\|_\infty\leq\vmax$ and with $\|a[k]\|_\infty\leq \amax$ such that $p[k+1]\in\Omega[k+1]$ and $\|v[k+1]\|_\infty\leq\vmax$, it holds that for $t\in[kh,(k+1)h]$, the separation $b(t)$ between $p(t)$ and the straight line connecting $\eta[s]$ and $\eta[s+1]$ is bounded by $|b(t)|\leq\frac{3}{2}\ell\sqrt{d}$.
\end{enumerate}
\end{problem}

 %Conditions \ref{Cond_1}) and \ref{Cond_2}) address the two possible cases for a particle moving between two adjacent path segments as illustrated in Fig.~\ref{Fig:Lemmas}. Assuming condition~\ref{Cond_1}) is satisfied implies that the particle can move in the direction of the path from $p[k-2]\in\Omega[k-2]$ to $p[k-1]\in\Omega[k-1]$ with bounded velocity and acceleration. Condition~\ref{Cond_2}) applies when the particle moves between $p[k]$ and $p[k+1]$ while it undergoes a sharp change of direction between its entrance into and exit from $\Omega[k]$. To account for such cases the formulation places two identical waypoints at the path node $\eta[s]$, i.e. $\varpi[k]=\varpi[k+1]=\eta[s]$. Finally, condition~\ref{Cond_3}) implies that $p(t)$ is also collision free since $b(t)$ is bounded in such way that $p(t)\in\chi_\mathrm{Free}$, $\forall t\in[0,t_f]$. 
 Assuming condition~\ref{Cond_1}) is satisfied implies that the particle can move in the direction of the path from $p[k-2]\in\Omega[k-2]$ to $p[k-1]\in\Omega[k-1]$ satisfying the constraints on velocity and acceleration. Condition~\ref{Cond_2}) applies when the particle moves between $p[k]$ and $p[k+1]$ while it undergoes a sharp change of direction in the path. If this condition is satisfied then a trajectory satisfying the constraints on velocity an acceleration can change direction in a time $h$ and continue satisfying such constraints. To account for such cases the formulation places two identical waypoints at the path node $\eta[s]$, i.e. $\varpi[k]=\varpi[k+1]=\eta[s]$. Finally, condition~\ref{Cond_3}) implies that $p(t)$ is also collision free since $b(t)$ is bounded in such way that $p(t)\in\chi_\mathrm{Free}$, $\forall t\in[0,t_f]$. 

It is shown, in Lemma~\ref{lemma:2l}--Lemma~\ref{lemma:bbar} in the Appendix, that a selection of $\vmax$ and $h$ solving Problem~\ref{Prob:h} is 
\begin{align}
         \vmax^2 := \ell\amax, \ \
         h^2 := &\frac{4\ell}{\amax}. \label{def_h}
\end{align}
It follows that there exists a collision-free trajectory $p(t)$ with duration $t_f$ satisfying $p[k]\in\Omega[k]$ and the dynamical constraints $\|v[k]\|_\infty\leq\vmax$, $\|a[k]\|_\infty\leq\amax$ such that the separation $b(t)$ between $p(t)$ and $\eta(\mathbf{s})$ is bounded by $\frac{3}{2}\ell\sqrt{d}$.
\begin{theorem}
\label{prop:Time}
Let $\chi_{\mathrm{Free}}$ be the obstacle free space,  $\eta$ be an obstacle-free path connecting $\pstart$ to $\pgoal$ and $\vmax,h$ defined by~\eqref{def_h}. Then given $\ell$, there exist a trajectory $p(t)$ such that $\forall t\in[0,t_f]$ where $t_f=Kh=(|\varpi|-1)h$, the following are satisfied:
\begin{itemize}
    \item $|b(t)|\leq\frac{3}{2}\ell\sqrt{d}$
    i.e. $p(t)\in\chi_{\mathrm{Free}}$
    \item $\|v[k]\|_\infty\leq\vmax$
    \item $\|a[k]\|_\infty\leq\amax$
    \item $p(0)=\pstart$, $v(0)=\vstart$, $v(t_f)=\vgoal$ for given $\pstart$,$\vstart$ and $\vgoal$
\end{itemize}
\end{theorem}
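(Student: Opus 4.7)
\textbf{Proof plan for Theorem~\ref{prop:Time}.}
The strategy is to construct the trajectory piecewise with constant acceleration on each sampling interval $[kh,(k+1)h]$, proceeding inductively along the sequence of time-indexed waypoints $\varpi[0],\ldots,\varpi[K]$. The three conditions of Problem~\ref{Prob:h}, which are shown to be satisfied by the choice~\eqref{def_h} in the Appendix lemmas, provide exactly the three pieces needed: Condition~\ref{Cond_1}) propagates the invariant between consecutive waypoints along a straight sub-segment of $\eta$, Condition~\ref{Cond_2}) handles the two-step transition at a path node $\eta[s]$ where $\varpi[k]=\varpi[k+1]$, and Condition~\ref{Cond_3}) upgrades the discrete guarantee $p[k]\in\Omega[k]$ to the continuous collision-avoidance bound $|b(t)|\leq \tfrac{3}{2}\ell\sqrt{d}$.

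First, I would set up the inductive invariant at time step $k$: \emph{$p[k]\in\Omega[k]$, $\|v[k]\|_\infty\leq\vmax$, and, if $\varpi[k]\neq\varpi[k+1]$, also $v[k]^\trans(\varpi[k+1]-\varpi[k])\geq 0$.} The base case is $p[0]=\pstart=\eta[0]=\varpi[0]\in\Omega[0]$ with $v[0]=\vstart\in\mathcal{X}_{\mathrm{allowed}}$; the velocity-alignment part is inherited from the hypothesis $\xstart\in\mathcal{X}_{\mathrm{allowed}}$ (or, for a generic $\vstart$, absorbed by prepending a short adjustment segment, which the $\vmax,h$ constants accommodate because $\|\eta[1]-\eta[0]\|_2$ is split into $\kappa_1$ sub-segments of length $\ell$). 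For the inductive step I would branch on whether $\varpi[k]=\varpi[k+1]$ (a node of $\eta$) or not. In the straight case I invoke Condition~\ref{Cond_1}) to obtain a piecewise-constant acceleration $a[k]$ with $\|a[k]\|_\infty\leq\amax$ that carries the invariant from $k$ to $k+1$. At a node I invoke Condition~\ref{Cond_2}) to obtain two consecutive accelerations $a[k]$ and $a[k+1]$ that together turn the velocity into alignment with the next segment while respecting $\Omega[k+1],\Omega[k+2]$ and the velocity bound. Iterating this until $k=K$ yields the sequence $\{p[k],v[k],a[k]\}_{k=0}^{K}$ satisfying all discrete constraints.

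Second, I would lift the discrete trajectory to the continuous-time signal $p(t)$ by integrating the constant acceleration on each interval $[kh,(k+1)h]$, so that $p(kh)=p[k]$ and $v(kh)=v[k]$ by construction. Applying Condition~\ref{Cond_3}) interval by interval, the separation $b(t)$ from the corresponding straight piece $\eta[s]\eta[s+1]$ is bounded by $\tfrac{3}{2}\ell\sqrt{d}$ for all $t$. Combined with the minimum clearance $\lmin$ between $\eta(\mathbf{s})$ and $\chi_\mathrm{obsReal}$ guaranteed by the sampling-based path planner (with $\ell$ chosen so that $\tfrac{3}{2}\ell\sqrt{d}\leq \lmin$), this yields $p(t)\in\chi_{\mathrm{Free}}$ for all $t\in[0,t_f]$, while the velocity/acceleration bounds $\|v[k]\|_\infty\leq\vmax$ and $\|a[k]\|_\infty\leq\amax$ follow directly from the inductive construction.

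The hardest piece will be the terminal boundary condition $v(t_f)=\vgoal$. The three conditions of Problem~\ref{Prob:h} give only the \emph{existence} of feasible accelerations at each step, so the constructive induction above is free to choose one family that reaches $\Omega[K]=\Omega[\varpi_S[\kappa_S]]$ but it does not automatically land with the prescribed $\vgoal$. To close this gap I would exploit the extra waypoint duplication at the goal node (analogously to the node-duplication argument in Condition~\ref{Cond_2})): reserve the last two sub-intervals as degrees of freedom and verify, via the same elementary kinematics used in the lemma that establishes~\eqref{def_h}, that with $\vmax^2=\ell\amax$ and $h^2=4\ell/\amax$ the reachable set of $(p,v)$ at the end of two steps starting from any $(p[K-2],v[K-2])$ consistent with the invariant covers a neighbourhood of $(\pgoal,\vgoal)$ whenever $\vgoal\in\mathcal{X}_{\mathrm{allowed}}$. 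The initial condition $v(0)=\vstart$ is handled symmetrically. Everything else is routine verification of the bounds, so the entire statement follows from the induction together with Condition~\ref{Cond_3}).
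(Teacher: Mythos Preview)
Your proposal is correct and follows essentially the same approach as the paper: the paper's own proof is the single sentence ``The proof follows from Lemma~\ref{lemma:2l}--Lemma~\ref{lemma:bbar},'' and what you have written is precisely the inductive argument that those three lemmas are designed to drive (Condition~\ref{Cond_1}) along straight sub-segments, Condition~\ref{Cond_2}) at duplicated node waypoints, Condition~\ref{Cond_3}) for the continuous bound on $b(t)$). Your discussion of the terminal condition $v(t_f)=\vgoal$ goes beyond what the paper spells out, but the mechanism you propose---using the free parameter $V_f$ that appears in the proof of Lemma~\ref{lemma:Vmax} at the final duplicated waypoint---is exactly the degree of freedom the paper's construction provides.
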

\begin{proof}
The proof follows from Lemma~\ref{lemma:2l} -- Lemma~\ref{lemma:bbar}.
\end{proof}

\subsection{Quadratic Program Formulation}
In the following a \textit{QP} to produce a trajectory for the robot is posed taking into account its dynamical constraints. This formulation, is based of the constraints derived in Section~\ref{Sec:Constrains}.

The objective function for the convex problem is formulated as a function of the acceleration by
\begin{align}\label{eq:objetivefunction}
\min_\mathbf{a} \quad \mathbf{a}^T H \mathbf{a},
\end{align}
where $\mathbf{a}=[a[0]^T \ \cdots \ a[K]^T ]^T\in \mathbb{R}^{dK}$ is a vector of accelerations with $a[k]\in \mathbb{R}^d$ as the acceleration at the discretization time $k$ and the matrix $H\in \mathbb{R}^{dK\times dK}$ is a positive definite constant matrix. By selecting $H$ appropriately a quadratic objective function of $(p,v,a,j)$ can be obtained~\cite{Chen2015decoupled}. For instance, since jerk is given by
$j[k]=\frac{a[k+1]-a[k]}{h}
$
by taking $H=W^T W$ 
$$
[w]_{\textup{ij}}=\left\lbrace
\begin{array}{rl}
    -1/h & \text{if } \textup{i}=\textup{j},  \\
     1/h & \text{if } \textup{i}=\textup{j}-1, \\
     0 & \text{otherwise}
\end{array}
\right.
$$
we obtain an objective function for minimizing jerk. Thus, the following convex optimization problem is defined:
\begin{problem}
\label{Prob:QP}
Taking $p[k]$ and $v[k]$ as 
\begin{align*}
 p[k]=&p[0]+hkv[0] + \frac{h^2}{2}\sum_\textup{i=0}^{k-1} (2(k-\textup{i})+3)a[\textup{i}] \\
v[k]=&v[0]+h\sum_\textup{i=0}^{k-1} a[\textup{i}],
\end{align*}
obtain a sequence of accelerations $a=[a[0]^T,\ldots,a[K]^T]^T$ that
$$
\text{ minimizes } \ \ \ \mathbf{a}^T H \mathbf{a}
$$
subject to
\begin{align*}
\begin{matrix}
p[0]=\pstart,  & v[0]=\vstart, & a[0]=\astart, \end{matrix}
\end{align*}
\begin{align*}
\begin{matrix}
\|\varpi[k]-p[k]\|_\infty\leq\ell, & \|v[k]\|_\infty\leq\vmax, & \|a[k]\|_\infty\leq\amax
\end{matrix}
\end{align*}
for $k=1,\ldots,K-1$ and
\begin{align*}
\begin{matrix}
p[K]=\pgoal, & v[K]=\vgoal, & a[K]=\agoal.
\end{matrix}
\end{align*}
Notice that $\|p[k]-\varpi[k]\|_\infty\leq\ell$ is equivalent to $p[k]\in\Omega[k]$.
\end{problem}

Since $p[k]$ and $v[k]$ are linear functions of $a[k]$ the optimization problem given in Problem~\ref{Prob:QP} is convex. Moreover, notice that by~Theorem~\ref{prop:Time} the posed \textit{QP} formulation is feasible. This approach, unlike~\cite{Augugliaro2012,Neunert2016,Richter2016,Chen2015decoupled}, avoids iterative formulations. 
%The quadratic convex optimization problem defined above can be solved efficiently with existing software. 

\section{Results}\label{sec:results}
 
The proposed algorithm was programmed in C++ and implemented on a PC running Ubuntu 14.04-LTS with Intel Core i5-3210M @ 2.50GHz and 4~GB of RAM. The \textit{QP} is solved using Mosek~\cite{Mosek}. First, benchmarking results are shown comparing our method with state-of-the-art algorithms based on simulations in various forest environments generated using the method in~\cite{KaramanForest}. Then, we present experimental results obtained with a \textit{Crazyflie 2.0}~\cite{CrazyFlie} quadcopter in different three dimensional scenarios with varying number of obstacles and complexity - see video at \url{https://youtu.be/DJ1IZRL5t1Q}.

\begin{figure*}
     \centering
     \subfloat[3D Planning in a $10m \times 10m \times 10m $ Poisson forests with density of $3.2~trees/m^2$.]{\label{subfig:CompA}
     \begin{picture}(150,1)
     \put(0,0){\includegraphics[width=5.0cm]{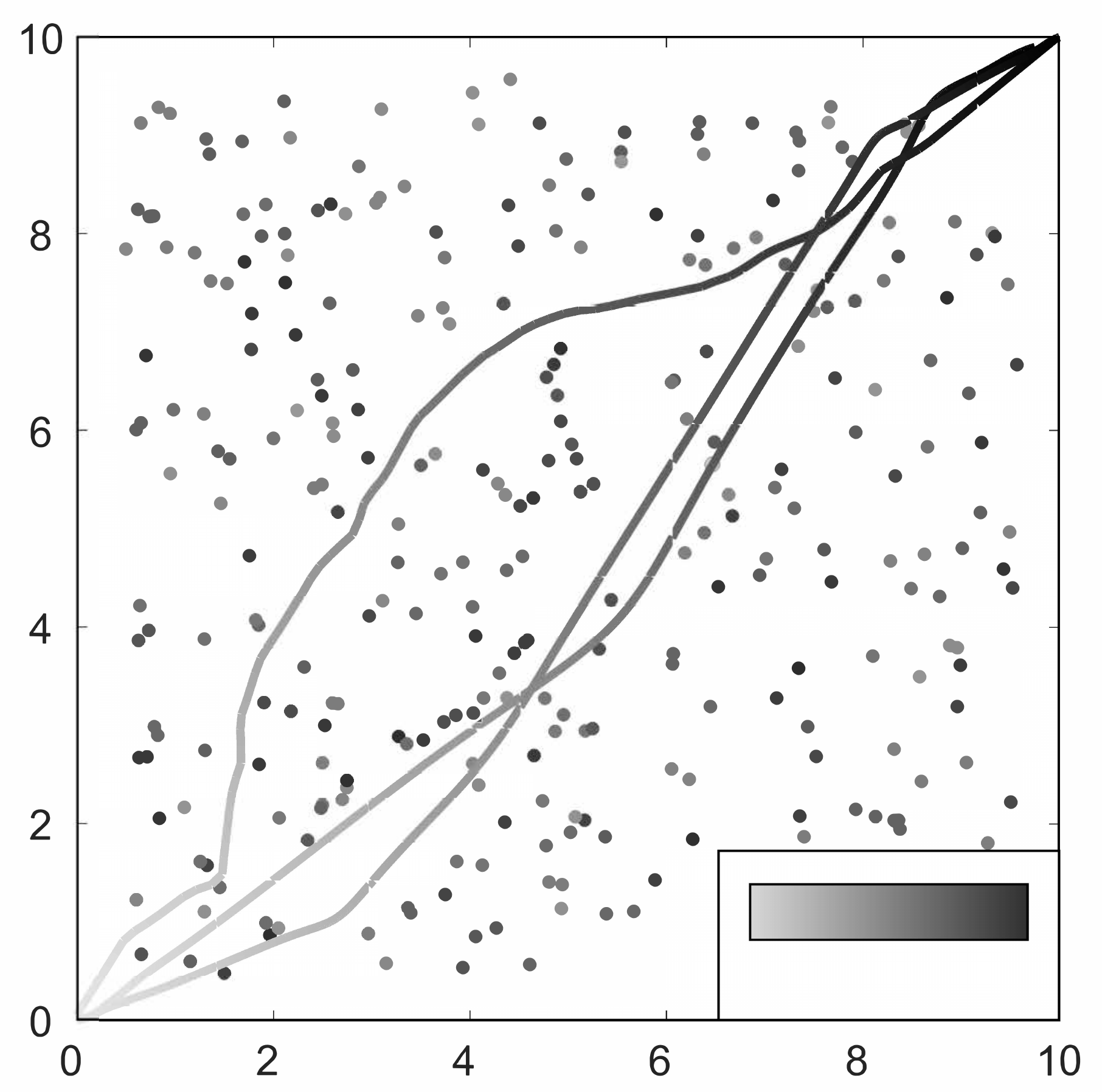}}
     \scriptsize{
     \put(95,27.5){\color[rgb]{0,0,0}\rotatebox{0}{\makebox(0,0)[lb]{\smash{\tiny{$0~m$}}}}}%
     \put(125,27.5){\color[rgb]{0,0,0}\rotatebox{0}{\makebox(0,0)[lb]{\smash{\tiny{$10~m$}}}}}%     
     \put(38,52){\color[rgb]{0,0,0}\rotatebox{55}{\makebox(0,0)[lb]{\smash{\textit{CHOMP}}}}}%
     \put(32,11){\color[rgb]{0,0,0}\rotatebox{25}{\makebox(0,0)[lb]{\smash{\textit{iSCP}}}}}%
     \put(81,54){\color[rgb]{0,0,0}\rotatebox{45}{\makebox(0,0)[lb]{\smash{OURS}}}}%    
     \put(102,13){\color[rgb]{0,0,0}\rotatebox{0}{\makebox(0,0)[lb]{\smash{\tiny{Height scale}}}}}%    
     }
     \end{picture}
     }
     \subfloat[Computation time vs tree density.]{\label{subfig:CompB}
     \begin{tikzpicture}[thick,scale=0.7, every node/.style={font=\Large, scale=0.7}]
\begin{axis}[
    xlabel={Density $[\mathrm{trees}/m^2]$ },
    ylabel={Average Computing Time $[s]$ },
    xmin=0.7, xmax=3.2,
    ymin=0.05, ymax=0.60,
    xtick={0.7,1.2,1.7,2.2,2.7,3.2},
    ytick={0.10,0.20,0.30,0.40,0.50},
    yticklabel style = {font=\Large},
    xticklabel style = {font=\Large},
    label style = {font=\huge},
    legend pos=north west,
    ymajorgrids=true,
    grid style=dashed,
]
 \addplot[
    color=green,
    mark=square,
    ]
    coordinates {
    (0.7,0.09)(1.2,0.14)(1.7,0.29)(2.2,0.34)(2.7,0.39)(3.2,0.59)
    };

 \addplot[
    color=blue,
    mark=square,
    ]
    coordinates {
    (0.7,0.14)(1.2,0.16)(1.7,0.20)(2.2,0.27)(2.7,0.33)(3.2,0.45)
    };
 
\addplot[
    color=red,
    mark=square,
    ]
    coordinates {
    (0.7,0.09)(1.2,0.11)(1.7,0.11)(2.2,0.12)(2.7,0.14)(3.2,0.15)
    };
    
    \addlegendentry{\textit{iSCP}}
    \addlegendentry{\textit{CHOMP}}
    \addlegendentry{Ours}
\end{axis}
\end{tikzpicture}
     }
     \subfloat[Success rate vs. tree density.]{\label{subfig:CompC}
     \begin{tikzpicture}[thick,scale=0.7, every node/.style={font=\Large, scale=0.7}]
\begin{axis}[
    xlabel={Density $[\mathrm{trees}/m^2]$ },
    ylabel={Success Rate},
    xmin=0.7, xmax=3.2,
    ymin=0.0, ymax=1.1,
    xtick={0.7,1.2,1.7,2.2,2.7,3.2},
    ytick={0.2,0.4,0.6,0.8,1.0},
    legend pos=south west,
    ymajorgrids=true,
    grid style=dashed,
    yticklabel style = {font=\Large},
    xticklabel style = {font=\Large},
    label style = {font=\huge},
]
 \addplot[
    color=green,
    mark=square,
    ]
    coordinates {
    (0.7,0.96)(1.2,0.89)(1.7,0.85)(2.2,0.79)(2.7,0.66)(3.2,0.47)
    };

 \addplot[
    color=blue,
    mark=square,
    ]
    coordinates {
    (0.7,0.90)(1.2,0.84)(1.7,0.73)(2.2,0.54)(2.7,0.28)(3.2,0.16)
    };
 
\addplot[
    color=red,
    mark=square,
    ]
    coordinates {
    (0.7,1.0)(1.2,1.0)(1.7,1.0)(2.2,1.0)(2.7,1.0)(3.2,1.0)
    };
    
    \addlegendentry{\textit{iSCP}}
    \addlegendentry{\textit{CHOMP}}
    \addlegendentry{Ours}
\end{axis}
\end{tikzpicture}
     }
     \caption{Algorithm comparison among \textit{iSCP}, \textit{CHOMP} and the proposed approach on different forest environments varying on tree density. }
    \label{fig:Comp}
\end{figure*}

\subsection{Benchmark results}

    The algorithm was compared against the Incremental \textit{SCP} (\textit{iSCP}) algorithm described in \cite{Chen2015decoupled} and against \textit{CHOMP}~\cite{Zucker2013}. Simulations were run in different Poisson forest environments~\cite{KaramanForest} with varying tree densities in a simulation space of $10m \times 10m \times 10m$. The height of the trees was set following a uniform distribution between $5m$ to $10m$. The start and goal positions for the mobile robot were selected randomly, such that the minimum travel distance was $8m$ starting and ending at rest with a maximum acceleration of $20 m/s^2$. For each benchmark 500 tests were performed.

\begin{table}[]
\centering
\begin{tabular}{c|cccc}
\textbf{Algorithm} & \textbf{\begin{tabular}[c]{@{}c@{}} Success\\ Rate\end{tabular}} & \textbf{\begin{tabular}[c]{@{}c@{}}Avg. Path\\ Length {[}m{]}\end{tabular}} & \textbf{\begin{tabular}[c]{@{}c@{}}Avg. Max\\ Velocity {[}m/s{]}\end{tabular}} & \textbf{\begin{tabular}[c]{@{}c@{}}Avg. Comp.\\ Time {[}s{]}\end{tabular}} \\ \hline
\textit{iSCP}  & 239/500 & \textbf{11.6886}     &  1.4612    &  0.4920 \\
\textit{CHOMP}      & 83/500  & 14.9088 & \textbf{1.7677} & 0.4495 \\
Ours  &   \textbf{500/500}   & 15.8581  &   1.5266    & \textbf{0.1519}                                   
\end{tabular}
\caption{Comparison in a Poisson forest with a density of 3.2 $\mathrm{trees}/m^2$.}
\label{Tab:results}
\end{table}

Some parameters had to be selected for each algorithm. The \textit{iSCP} was configured using a total time of 15 seconds and a discretization step of 0.2 seconds. In the case of \textit{CHOMP} it was configured using a fixed value $N$ of 100 points. These parameters were selected trying to balance the success rate and the computation time. They were set based on trial and error since no methodology is known to calculate them in advance. Finally, for the proposed algorithm, a minimum separation to the obstacles is required, as described in Section~\ref{sec:outline}. Thus, the obstacles were inflated by $\lmin$, resulting in $\ell = 0.05$. For path generation a goal region with a radius of $0.005m$ was specified. The \textit{IRRT*} algorithm was run at most four rounds of path generation followed by refinement of the sampling region; the algorithm was stopped early if no reduction to the cost of the solution was obtained in the last round of \textit{RRT*} compared to the previous round. Table~\ref{Tab:results} shows the benchmark results for a forest density of $3.2~\mathrm{trees}/m^2$.  The algorithms are compared in terms of their success rate, average mean path length, average maximum absolute velocity and average computing time. Although, the \textit{iSCP} and \textit{CHOMP} performed slightly better in path length and maximum velocity, respectively, the proposed approach excelled in success rate and its computing time was significantly lower. This success rate is inherited from the sampling-based
path planning algorithm (which is probabilistic complete),
since our method ensures the feasibility of the \textit{QP} problem.

Fig.~\ref{fig:Comp} shows a benchmark where the tree density was varied between $0.7$ and $3.2$ $\mathrm{trees}/m^2$. Typical trajectories are shown in Fig.~\ref{subfig:CompA}. For ease of visualization, the thickness of the paths and the radius of the trees are not drawn at scale. The comparison on computing time and success rate are shown in Fig.~\ref{subfig:CompB} and Fig.~\ref{subfig:CompC}, respectively.

\subsection{Experimental Results}
The test setup consisted of an Optitrack motion capture system with 15 infra-red cameras providing localization with sub-millimeter accuracy. The motion capture software and the trajectory planning algorithm were run on the same computer. The current position of the quadcopter and the corresponding reference point was broadcasted every 6 ms. The firmware of the \textit{Crazyflie 2.0} was modified to implement a custom nonlinear control algorithm for precise trajectory tracking.

\begin{figure}
\begin{center}
\includegraphics[width=8cm]{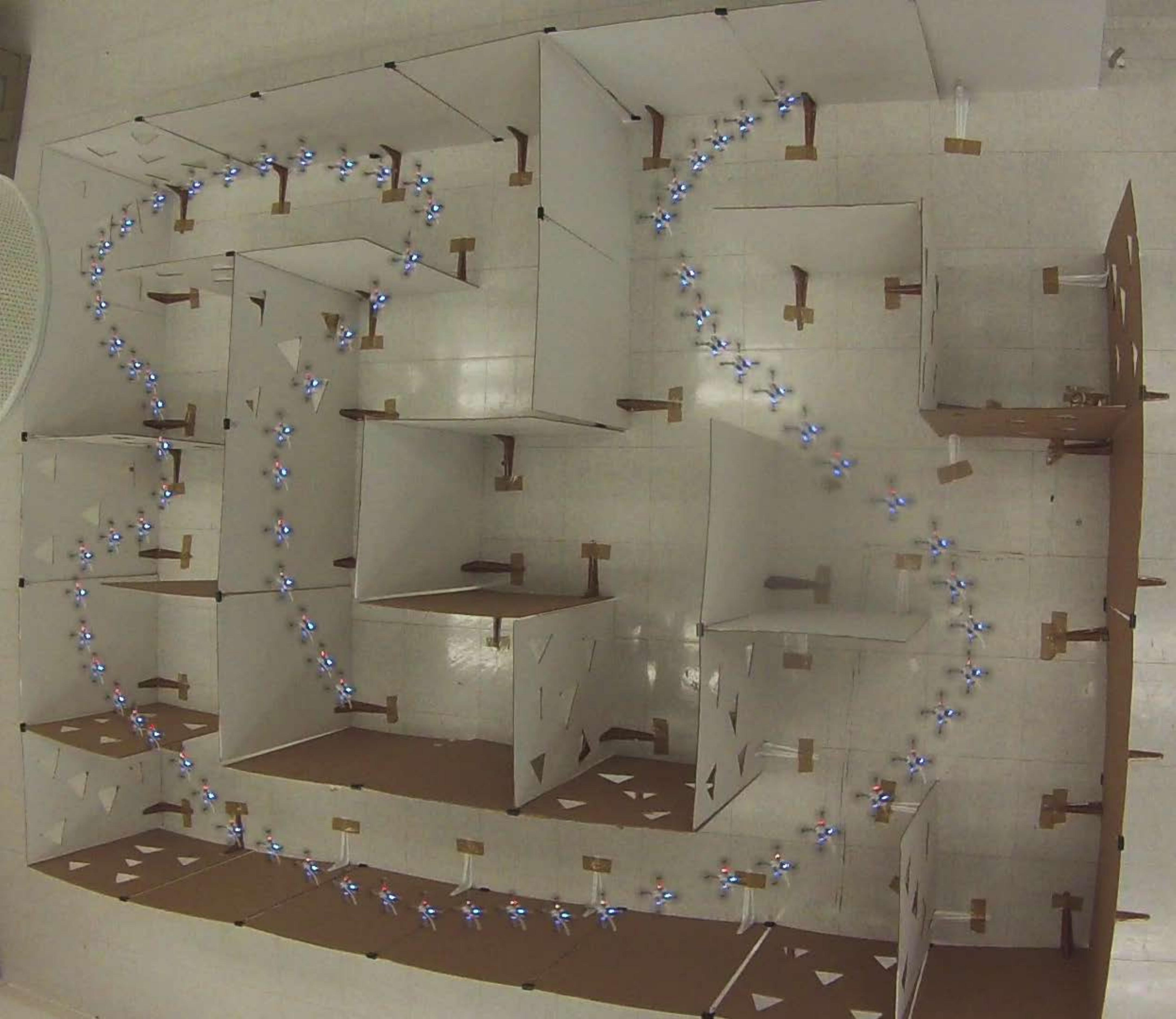}
\end{center}
\caption{A composite image of a quadcopter performing a trajectory in a 3x5 meters maze with average planning time of $\sim100$ms. }
\label{Fig:mazeExperiment}
\end{figure}

The first scenario, shown in Fig.~\ref{Fig:mazeExperiment}, is a $3m\times5m$  maze where the quadcopter had to navigate from an initial position inside the maze to a final goal. The task was repeated multiple times such that the goal of the last task became the initial position for the next. A new goal was generated by moving a marker inside the maze. The same maze environment was simulated in a computer, obtaining an average planning time of $\sim100$ms based on 500 trials with random start and end points.

The algorithm was also tested in a 3-dimensional scenario consisting of two hundred interwoven strings and twenty poles inside a volume of two cubic meters. This scenario, illustrated in Fig.~\ref{Fig:ExperimentStrings}, is similar to the one presented in~\cite{Landry2016} which consisted of 20 interwoven strings. Planning times in the order of minutes were reported in that work compared to a fraction of a second in the present work. The scenario was manually mapped using the motion capture system. During the experiment, new goals were provided online to the quadcopter while it navigated inside the complex environment. 

For these experiments, the radius $r$ of $\mathcal{B}(p)$ was selected to be $0.035m$ based on the quadcopter dimensions. Considering also a diameter of $0.005m$ for the strings and an average tracking error of $0.01m$, the maximum $\ell$ resulted in $0.035m$. The value of $\ell$ was set to $0.02$. The maximum acceleration in these experiments was set to $20m/s^2$ in each axis based on trial and error. Using these parameters, the algorithm took on average $\sim300$ms in a benchmark of 500 simulated trials. %Experimental results are shown in~Fig.~\ref{Fig:ExperimentStrings}. 

 \begin{table}[]
 \centering
 \vspace{1em}
 \begin{tabular}{c|cccc}
 \textbf{\begin{tabular}[c]{@{}c@{}}Parameter \\  $\mathbf{\ell}$ $\mathbf{[m]}$\\ \end{tabular}} & \textbf{\begin{tabular}[c]{@{}c@{}}Average\\ Mean Path\\  Length $\mathbf{[m]}$\end{tabular}} & \textbf{\begin{tabular}[c]{@{}c@{}}Average \\ Max\\ Velocity $\mathbf{[m/s]}$\end{tabular}} & \textbf{\begin{tabular}[c]{@{}c@{}}Average\\  Max\\ Acc $\mathbf{[m/s^2]}$\end{tabular}} & \textbf{\begin{tabular}[c]{@{}c@{}}Average \\ Compute\\ Time $\mathbf{[s]}$\end{tabular}} \\ \hline
0.010 & \textbf{3.6899} & 0.4391 &  \textbf{2.5828} & 0.9472   \\
0.015 & 3.8931 & 0.5386 & 2.6315 & 0.4750   \\
\textbf{0.020} & 3.9431 & 0.6226 & 2.6509 & 0.2922   \\
0.025 & 4.1916 & 0.6923 & 2.7354 & \textbf{0.2588}   \\
0.030 & 4.6593 & 0.7675 & 2.8683 & 0.2986   \\
0.035 & 4.7108 & \textbf{0.8268} & 2.8670 & 0.3853
 \end{tabular} 
 \caption{Comparison of the run time for different values of $\ell$.}
  \label{tab:TableComparison}
 \end{table}

Since the performance of the algorithm is affected by the desired minimum separation $\lmin$, and as a consequence by the parameter $\ell$, a benchmark varying the values of $\ell$ was performed for the same scenario. The results are presented in~Table~\ref{tab:TableComparison} where it can be seen that a low value of $\ell$ results in short trajectories, but higher computing time due to increased size of the \textit{QP} problem. On the other hand, a large value of $\ell$ results in a smaller \textit{QP} problem while a larger separation between the path and the trajectory is allowed and a higher-clearance is obtained, resulting in longer travels for the trajectories. Finally, it can be observed that the sampling-based planner takes longer to find a solution as $\ell$ is increased. This is expected since $\chi_{\mathrm{obs}}$ region size depends on $\ell$.

\section{Conclusions and Future Work}\label{sec:conclusions}
An algorithm for online trajectory planning and replanning of mobile robots was presented. The proposed method uses a sampling-based planning algorithm to efficiently determine an obstacle-free path. Then, an optimization program takes into account the robot dynamical constraints to generate a time dependent trajectory. The main contribution of this method lies in the formulation of the optimization problem that is guaranteed to be feasible, avoiding iterative formulations.
The effectiveness of this method was shown by applying it to the online planning of a quadcopter in multiple scenarios consisting of up to two hundred obstacles. The scenarios presented herein are some of the most obstacle-dense scenarios reported to date for online planning of a quadcopter.

Our approach was contrasted with state-of-the-art algorithms showing a significant improvement in computation time and success rate, which demonstrates its effectiveness for online planning in obstacle-dense environments.

As for future work, we aim to apply the algorithm in complex dynamic environments as well as its extension to multi-agent planning in obstacle-dense environments.

\appendix[Auxiliary results for the constraints derivation]
\label{Appendix}
Notice that, since the waypoints connecting $\eta[s]$ and $\eta[s+1]$ are placed in the line segment $\lambda(\eta[s+1]-\eta[s])+\eta[s]$, for some $\eta[s]$ and some $\lambda\in[0,1]$, it is always possible to find a coordinate transformation, for each pair $\eta[s]$, $\eta[s+1]$ with orthogonal axis, in such a way that  $\lambda(\eta[s+1]-\eta[s])+\eta[s]$ lie in the $x$-axis (with the $x$-axis pointing toward $\eta[s+1]$) of the new coordinate system. The notation $p_x$, (resp. $v_x$ and $a_x$) is used to represent the component of the position (resp. velocity and acceleration) along the $x$-axis.

Notice that, in this new coordinate system 
$
\Omega[k+1]=\left\lbrace \rho:\rho=\gamma+[\ell \ 0\ \cdots\ 0]^\trans, \gamma\in\Omega[k] \right\rbrace
$.
Additionally, without loss of generality, the origin is placed at $\varpi[k]$.
\begin{lemma}
Given the set of time-indexed waypoint regions~\eqref{Eq:WayPointRegion} and taking $A_{max}$ and $h$ as in~\eqref{def_h}, then for $p[k] \in \Omega[k]$ and $ v[k]$ such that $\| v[k]\|_\infty \leq \vmax$ and $v[k]^\trans(\varpi[k+1]-\varpi[k])\geq 0$, there exists a constant acceleration $ \|a[k]\|_\infty\leq\amax$ such that $ p[k+1] \in \Omega[k+1]$, $ \| v[k+1]\|_\infty\leq \vmax$ and $ v[k+1]^\trans(\varpi[k+1]-\varpi[k])\geq 0$.
\label{lemma:2l}
\end{lemma}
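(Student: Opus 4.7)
The plan is to exploit the product structure of all constraints to reduce the lemma to a collection of one-dimensional subproblems that can be solved in closed form. As a first step, I would adopt the local coordinate frame described at the beginning of the appendix: translate the origin to $\varpi[k]$ and rotate so that $\varpi[k+1]-\varpi[k] = [\ell,0,\ldots,0]^T$. In this frame $\Omega[k]=\{\rho:\|\rho\|_\infty\leq\ell\}$, $\Omega[k+1]$ is the same box shifted by $\ell$ along the $x$-axis, and the forward-motion condition $v[k]^T(\varpi[k+1]-\varpi[k])\geq 0$ collapses to $v_x[k]\geq 0$. Since the position, velocity and acceleration constraints are all $\ell_\infty$-balls and the kinematic updates $p[k+1]=p[k]+h v[k]+\tfrac{h^2}{2}a[k]$ and $v[k+1]=v[k]+h a[k]$ act componentwise, the problem decouples into $d$ independent scalar problems.

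Using the identities that follow from $\vmax^2=\ell\amax$ and $h^2=4\ell/\amax$, namely $h\vmax=2\ell$, $\tfrac{1}{2}h^2\amax=2\ell$ and $h\amax/\vmax=2$, I would normalize position by $\ell$, velocity by $\vmax$ and acceleration by $\amax$. The scalar dynamics then reduce to $\tilde v[k+1]=\tilde v[k]+2\tilde a[k]$, together with the telescoping identity $\tilde p[k+1]=\tilde p[k]+\tilde v[k]+\tilde v[k+1]$, and all constraints become unit intervals in which the arithmetic is transparent.

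For each axis transverse to the motion, the scalar subproblem becomes: given $\tilde p,\tilde v\in[-1,1]$, find $\tilde v'\in[-1,1]$ such that $\tilde p+\tilde v+\tilde v'\in[-1,1]$, then recover $\tilde a=(\tilde v'-\tilde v)/2$. I would show that the admissible interval $[\max(-1,-1-s),\min(1,1-s)]$ for $\tilde v'$, with $s=\tilde p+\tilde v\in[-2,2]$, is non-empty by splitting on the sign of $s$, and observe that $|\tilde a|\leq 1$ follows automatically from $\tilde v,\tilde v'\in[-1,1]$. For the $x$-axis I would repeat the argument with the tighter constraints $\tilde v_x,\tilde v'_x\in[0,1]$ and target $\tilde p'_x\in[0,2]$; the analogous case split on $s_x=\tilde p_x+\tilde v_x\in[-1,2]$ again produces a non-empty interval, with the added bound $|\tilde a_x|\leq 1/2$.

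The main obstacle is keeping the boundary case analysis exhaustive, particularly when both $|\tilde p|$ and $|\tilde v|$ are close to one so that the admissible interval for $\tilde v'$ shrinks to a point; this saturating regime is precisely where the acceleration hits $\pm\amax$ and shows that the choice~\eqref{def_h} is tight. Once the scalar existence claims are in hand, assembling the components of $a[k]$ and undoing the normalization and coordinate change recovers the global statement with $\|a[k]\|_\infty\leq\amax$, $p[k+1]\in\Omega[k+1]$, $\|v[k+1]\|_\infty\leq\vmax$, and $v[k+1]^T(\varpi[k+1]-\varpi[k])\geq 0$.
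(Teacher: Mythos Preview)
Your proposal is correct and follows essentially the same route as the paper: rotate so that $\varpi[k+1]-\varpi[k]$ lies along the $x$-axis, then exploit the coordinate-wise decoupling of the kinematics and the $\ell_\infty$ constraints to reduce to one-dimensional problems, treating the forward ($x$) direction and the transverse directions separately. The only notable difference is in execution: the paper writes down explicit closed-form accelerations (namely $a_x[k]=(\vmax-2v_x[k])/h$ and $a_y[k]=-2v_y[k]/h$), which land $p[k+1]$ exactly at $p[k]+[\ell,0,\ldots,0]^T$, whereas you argue existence by showing the admissible interval for $\tilde v'$ is non-empty via a sign split on $s=\tilde p+\tilde v$; both are valid, and your version has the small bonus of exhibiting where the choice~\eqref{def_h} becomes tight.
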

\begin{proof}
The proofs will be performed independently for each coordinate of the new coordinate system.

\textit{For the $x$--coordinate.} The conditions
$v[k]^\trans(\varpi[k+1]-\varpi[k])\geq 0$, $v[k+1]^\trans(\varpi[k+1]-\varpi[k])\geq 0$, $\|v[k]\|_\infty\leq\vmax$ and $\|v[k+1]\|_\infty\leq\vmax$ imply that $v_x[k]$ and $v_x[k+1]$ lie in $[0,\vmax]$. 

Let $p[k]\in \Omega[k+1]$ and $v_x[k]\in [0,\vmax]$ and notice that \eqref{def_h} implies $\amax=\dfrac{2\vmax}{h}$. Then, with the acceleration
$$
a_x[k] = \frac{\vmax - 2v_x[k]}{h}\in\left[-\dfrac{\vmax}{h},\dfrac{\vmax}{h}\right]\subset[-\amax,\amax]
$$
is obtained that
\begin{align*}
p_x[k+1]=p_x[k]+\ell\ \text{ and } \ v_x[k+1]=-v_x[k]+\vmax\in [0,\vmax]
\end{align*}

\textit{For the $y$--coordinate.}With $v_y[k]\in[-\vmax,\vmax]$, and

$$
a_y[k] = -\frac{2v_y[k]}{h} \in\left[-\frac{2\vmax}{h},\frac{2\vmax}{h}\right]=[-\amax,\amax]
$$
Then $p_y[k+1]=p_y[k]$ and $v_y[k+1]=-v_y[k]$.

\textit{For the rest of the coordinates.} The argument is equivalent as in the $y$--coordinate.
The above results shows that
$$
p[k+1] = p[k] + [\ell\ 0\ \cdots \ ]^\trans \in \Omega[k+1].
$$

Additionally, $v[k+1]]^\trans(\eta[s+1]-\eta[s])\geq 0$ in the new coordinate system is $[v_x[k+1]\ v_y[k+1]\ \cdots][\alpha\ 0\ \cdots\ 0]^\trans \geq 0$, for some $\alpha>0$. In this way $[v_x[k+1]\ v_y[k+1]\ \cdots][\alpha\ 0\ \cdots\ 0]^\trans = \alpha v_x[k+1] \geq 0$, which completes the proof.
\end{proof}

\begin{lemma}
Let $\varpi[k]=\varpi[k+1]=\eta[s]$ and let $A_{max}$ and $h$ be given as in~\eqref{def_h}. Then given the set of time indexed waypoint regions~\eqref{Eq:WayPointRegion} for $p[k] \in \Omega[k] $, and $v[k]$ satisfying $ \|v[k]\|_\infty \leq \vmax$, there exists a pair of constant accelerations  $\|a[k]\|_\infty\leq\amax$ and  $\|a[k+1]\|_\infty\leq\amax$,  such that $ p[k+1] \in \Omega[k+1] $, $ \|v[k+1]\|_\infty \leq\vmax$, $ p[k+2] \in \Omega[k+2]$ and $ \|v[k+2]\|_\infty\leq\vmax$ with $v[k+2]^\trans(\varpi[k+2]-\varpi[k+1])\geq 0$.
\label{lemma:Vmax}
\end{lemma}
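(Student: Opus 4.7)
The proof strategy mirrors that of Lemma~\ref{lemma:2l}: I will work coordinate-by-coordinate in a suitably chosen frame and construct the pair of accelerations explicitly. Since $\varpi[k+1]-\varpi[k]=0$, the natural frame here is the one aligned with $\varpi[k+2]-\varpi[k+1]$, with the origin at $\varpi[k+1]=\varpi[k]=\eta[s]$. In this frame $\Omega[k]=\Omega[k+1]=[-\ell,\ell]^d$, while $\Omega[k+2]$ is an axis-aligned hypercube centered at $(\xi,0,\ldots,0)$ with $\xi=\|\varpi[k+2]-\varpi[k+1]\|_2\in(0,\ell]$. The sign condition $v[k+2]^\trans(\varpi[k+2]-\varpi[k+1])\geq 0$ then reduces to $v_x[k+2]\geq 0$.

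The first step $k\to k+1$ must place the particle into an intermediate state from which a feasible second step exists, while already guaranteeing $p[k+1]\in\Omega[k+1]$ and $\|v[k+1]\|_\infty\leq\vmax$. Guided by the identities $h\vmax=2\ell$ and $h\amax=2\vmax$ derived from~\eqref{def_h}, I will set, componentwise in the new frame,
$a_i[k]=-\frac{2}{h^2}p_i[k]-\frac{1}{h}v_i[k]$.
Direct substitution into the particle kinematic equations yields $p_i[k+1]=(\ell/\vmax)\,v_i[k]$ and $v_i[k+1]=-(\vmax/\ell)\,p_i[k]$, which immediately gives $|p_i[k+1]|\leq\ell$ and $|v_i[k+1]|\leq\vmax$. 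The acceleration bound follows from the triangle inequality $\|a[k]\|_\infty\leq 2\ell/h^2+\vmax/h=\amax/2+\amax/2=\amax$.

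For the second step $k+1\to k+2$, I choose $a[k+1]$ componentwise. In the $x$-coordinate I select $a_x[k+1]$ to jointly satisfy the three linear constraints $p_x[k+2]\in[\xi-\ell,\xi+\ell]$, $v_x[k+2]\in[0,\vmax]$ (the latter encoding the sign condition), and $|a_x[k+1]|\leq\amax$. In each remaining coordinate $i\neq x$ no sign condition is needed, so $a_i[k+1]$ only needs to enforce $p_i[k+2]\in[-\ell,\ell]$, $|v_i[k+2]|\leq\vmax$, and $|a_i[k+1]|\leq\amax$.

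The main obstacle is verifying that the feasibility interval for $a_x[k+1]$ is nonempty. After substituting the Step~1 expressions for $p_x[k+1]$ and $v_x[k+1]$ and rescaling by $\ell$, $\vmax$, and $\amax$, the three constraints reduce to a set of elementary inequalities in the dimensionless parameters $q=p_x[k]/\ell\in[-1,1]$, $w=v_x[k]/\vmax\in[-1,1]$, and $\xi/\ell\in(0,1]$. A short case analysis, crucially using $\xi\leq\ell$, shows the intersection of the three intervals is nonempty (in fact of length at least $1/2$ in the dimensionless parameter). The transverse coordinates are handled identically but without the sign requirement, making their feasibility intervals strictly easier to verify. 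Assembling the componentwise accelerations gives the desired $a[k+1]$ and completes the proof.
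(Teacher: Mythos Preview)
Your Step~1 is correct and in fact cleaner than the paper's: since $a[k]=-\tfrac{2}{h^2}p[k]-\tfrac{1}{h}v[k]$ is a frame-independent linear combination, the bounds $\|a[k]\|_\infty\le\amax$, $p[k+1]\in\Omega[k+1]$, and $\|v[k+1]\|_\infty\le\vmax$ follow directly in the \emph{original} coordinate system from the hypotheses, with no rotation needed at all.

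The gap is in Step~2. Your assertion that in the frame aligned with $\varpi[k+2]-\varpi[k+1]$ one has $\Omega[k]=\Omega[k+1]=[-\ell,\ell]^d$ and that $\Omega[k+2]$ is an axis-aligned cube there is false: the regions $\Omega[\cdot]$ are $\ell_\infty$-balls in the \emph{original} frame (equation~\eqref{Eq:WayPointRegion}), and an orthogonal rotation does not keep such cubes axis-aligned. The same applies to the constraints $\|a[k+1]\|_\infty\le\amax$ and $\|v[k+2]\|_\infty\le\vmax$: these do not decouple into independent scalar bounds on the rotated components. Consequently, choosing $a_x[k+1]$ and the transverse $a_i[k+1]$ separately in the new frame and verifying $|a_i[k+1]|\le\amax$, $p_x[k+2]\in[\xi-\ell,\xi+\ell]$, $|v_i[k+2]|\le\vmax$ there certifies neither $p[k+2]\in\Omega[k+2]$ nor the original-frame $\ell_\infty$ bounds on $a[k+1]$ and $v[k+2]$. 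The paper's proof confronts exactly this point by introducing a factor $\alpha\in[1,\sqrt{d}]$ that tracks how the original-frame cube projects onto the rotated $x$-axis, writing $p_x[k]\in[-\alpha\ell,\alpha\ell]$, $v_x[k]\in[-\alpha\vmax,\alpha\vmax]$, and then tailoring the explicit accelerations $a_1,a_2$ to remain in $[-\alpha\amax,\alpha\amax]$. Your dimensionless case analysis in $q=p_x[k]/\ell$, $w=v_x[k]/\vmax$, $\xi/\ell$ implicitly assumes $\alpha=1$; it would have to be redone with the enlarged state ranges (and with the correct, non-axis-aligned description of the feasible acceleration set) before it could close.
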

\begin{proof}
The proofs will be performed independently for each coordinate of the new coordinate system.

\textit{For the $x$--coordinate.}
The transformation changes the constraints as follows. For $\varpi[k]$ placed at the origin, $p_x[k]\in[-\alpha \ell,\alpha \ell]$, $v_x[k]\in[-\alpha \vmax,\alpha \vmax]$, 
$a_x[k]\in[-\alpha \amax,\alpha \amax]$ for some $\alpha\in[1,\sqrt{d}]$. Thus, in the new coordinate system $p_x[k]$ can be written as $p_x[k] = \alpha\ell(2\lx - 1)$, $\lx \in[0,1]$ and $v_x[k] = \alpha\vmax(2\lv - 1)$,  $\lv \in[0,1]$ and choosing
\begin{align*}
a_1 &= \alpha\amax\left(1-\frac{\lx}{2}-\frac{3\lv}{2}\right)\in[-\alpha\amax,\alpha\amax]\\
a_2 &= \alpha\amax\left(\lxxx +\lxx-\frac{1}{2}\right)\in\left[-\alpha\amax,\alpha\amax\right]
\end{align*}
with
\begin{align*}
\lxx &= \frac{1}{2}(\lx+\lv)\in[0,1], \ \ \lvv = 1-\lxx\in[0,1] \\
\lxxx &= \frac{V_f}{2\vmax}\in\left[0,\dfrac{1}{2}\right], \ \ V_f \in\left[0,\dfrac{\vmax}{\alpha}\right]
\end{align*}
is obtained that
\begin{align*}
p_x[k+1] &= \alpha\ell(2\lxx-1)\in[-\alpha\ell,\alpha\ell] \\
v_x[k+1] &= \alpha\vmax(2\lvv -1)\in[-\alpha\vmax,\alpha\vmax]\\
p_x[k+2] &= 2\alpha\ell\lxxx\in\left[0,\ell\right], \ \ \ 
v_x[k+2] = \alpha V_f
\end{align*}

\textit{For the rest of the coordinates.}
The result for the $y$-coordinate in the proof of Lemma~\ref{lemma:2l} can be used for the interval $[kh,(k+1)h]$ and for $[(k+1)h,(k+2)h]$, so that $p_y[k]=p_y[k+1]$  with feasible velocities and accelerations and the same for the rest of the coordinates.
The above results can be used to prove that $p[k+1]\in\Omega[k+1]$ and $p[k+2]\in\Omega[k+2]$ while $v[k+2]^\trans(\eta[s+1]-\eta[s])\geq0$ in a similar way as in the proof of Lemma~\ref{lemma:2l}, which completes the proof.
\end{proof}

\begin{lemma}
Given the set of time indexed waypoint regions~\eqref{Eq:WayPointRegion} and taking $A_{max}$ and $h$ as in~\eqref{def_h}, then for $p[k]\in\Omega[k]$ and $\|v[k]\|_\infty\leq\vmax$ and with $\|a[k]\|_\infty\leq \amax$ such that $p[k+1]\in\Omega[k+1]$ and $\|v[k+1]\|_\infty\leq\vmax$, then $|b(t)|\leq\frac{3}{2}\ell\sqrt{d}$ for $t\in(kh,(k+1)h)$.
\label{lemma:bbar}
\end{lemma}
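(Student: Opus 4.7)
The plan is to bound $|b(t)|$ by a two-step triangle argument: first bound how far $p(t)$ deviates from the straight chord connecting $p[k]$ and $p[k+1]$, and then bound how far that chord sits from the piecewise-linear path $\eta(\mathbf{s})$. Summing the two $\ell_\infty$ estimates and converting to an $\ell_2$ bound via $\|\cdot\|_2\leq\sqrt{d}\,\|\cdot\|_\infty$ in $\mathbb{R}^d$ will deliver the claimed $\tfrac{3}{2}\ell\sqrt{d}$.

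For the first step, set $\tau=t-kh\in[0,h]$ and write $p(kh+\tau)=p[k]+v[k]\tau+\tfrac{1}{2}a[k]\tau^2$. Introduce the chord $L(\tau)=(1-\tau/h)\,p[k]+(\tau/h)\,p[k+1]$. Substituting $p[k+1]=p[k]+v[k]h+\tfrac{1}{2}a[k]h^2$ cancels the velocity term and gives the clean identity
\begin{align*}
p(kh+\tau)-L(\tau)=\tfrac{1}{2}\,a[k]\,\tau(\tau-h).
\end{align*}
Since $\|a[k]\|_\infty\leq\amax$ and $\tau(h-\tau)$ attains its maximum $h^2/4$ at $\tau=h/2$, I get the componentwise bound
\begin{align*}
\|p(kh+\tau)-L(\tau)\|_\infty \;\leq\; \tfrac{\amax h^2}{8} \;=\; \tfrac{\ell}{2},
\end{align*}
where the last equality uses $h^2=4\ell/\amax$ from~\eqref{def_h}.

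For the second step, introduce the analogous waypoint chord $W(\tau)=(1-\tau/h)\,\varpi[k]+(\tau/h)\,\varpi[k+1]$. Because $p[k]\in\Omega[k]$ and $p[k+1]\in\Omega[k+1]$, the hypercube constraint $\|p[k]-\varpi[k]\|_\infty\leq\ell$ and its analog at $k+1$, together with the triangle inequality applied convex-combinationwise, yield $\|L(\tau)-W(\tau)\|_\infty\leq\ell$. By the waypoint construction of Section~\ref{Sec:Constrains}, $\varpi[k]$ and $\varpi[k+1]$ either lie on a single segment $\eta[s]\eta[s+1]$ of the path (regular case) or coincide at a node $\eta[s]$ (corner case with duplicated waypoint); in both cases $W(\tau)$ lies on $\eta(\mathbf{s})$, so it contributes zero to $b(t)$. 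Chaining the two $\ell_\infty$ bounds and passing to the $\ell_2$ norm,
\begin{align*}
|b(t)| \;\leq\; \|p(t)-W(\tau)\|_2 \;\leq\; \sqrt{d}\,\bigl(\tfrac{\ell}{2}+\ell\bigr) \;=\; \tfrac{3}{2}\,\ell\,\sqrt{d}.
\end{align*}

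I expect the kinematic computation to be routine; the one piece requiring care is the bookkeeping argument that the waypoint chord $W(\tau)$ actually lies on the path, including the corner case $\varpi[k]=\varpi[k+1]=\eta[s]$ where $W(\tau)$ collapses to a single node. Once that is acknowledged from the waypoint construction, the velocity constraint $\|v[k]\|_\infty\leq\vmax$ plays no direct role in the bound because $v[k]$ drops out of the difference $p-L$, so the proof reduces to the acceleration estimate and the chord triangle inequality above.
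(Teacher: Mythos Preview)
Your proof is correct and takes a genuinely different route from the paper's. The paper argues per coordinate by locating the extremal configuration: with the origin at $\varpi[k]$, it asserts that the largest excursion of $p_x(t)$ occurs when $p_x[k]=\pm\ell$, $v_x[k]=\pm\vmax$, forcing $a_x[k]=\mp\amax$, and then evaluates the trajectory at the midpoint $t^*=(k+\tfrac12)h$ to get $|p_x(t^*)|=\tfrac{3}{2}\ell$; combining coordinates gives $\|p(t)-\varpi[k]\|_2\le\tfrac{3}{2}\ell\sqrt{d}$, hence the bound on $b(t)$. Your chord decomposition instead splits $p(t)-W(\tau)$ into $(p(t)-L(\tau))+(L(\tau)-W(\tau))$ and bounds each piece directly. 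This buys you two things: you never have to identify or justify the extremal case (which the paper asserts without a full argument), and, because $v[k]$ cancels in $p-L$, you show transparently that the velocity hypothesis $\|v[k]\|_\infty\le\vmax$ plays no role in the bound, whereas the paper's argument uses $v_x[k]=\pm\vmax$ explicitly. The paper's approach, on the other hand, makes the worst case concrete and ties the bound visibly to the single fixed point $\varpi[k]$ rather than to a moving reference $W(\tau)$. Both land on the same constant $\tfrac{3}{2}\ell\sqrt{d}$ through the identity $\amax h^2/8=\ell/2$.
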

\begin{proof}
To show this argument is not required to rotate the coordinate system but only to place the origin at $\varpi[k]$. Taking into account first only the $x$-coordinate, the biggest separation for $p_x(t)$, $t\in[kh,(k+1)h]$ occurs when $p_x[k]=\pm\ell$ and $v_x[k]=\pm\vmax$. The only way that $p_x[k+1] \in [-\ell,\ell]$ and $v_x[k+1]\in[-\vmax,\vmax]$ for feasible $a_x[k]$ is that $p_x[k+1]=p_x[k]$ and $v[k+1]=-v[k]$. For this condition it is required that $a_x[k] = \mp\amax$. Due to the symmetry of the problem, the maximum value of $p_x(t)$ for $t\in(kh,(k+1)h)$ occurs when $t=\tcritic = \left(k+\frac{1}{2}\right)h$ so that $x(\tcritic) = \frac{3}{2}\ell$.
If every coordinate has a maximum separation of $\frac{3}{2}\ell$ then $b(t)$ will have a bound given by $|b(t)|\leq \frac{3}{2}\ell\sqrt{d}$.
\end{proof}

\section*{Acknowledgements}
The authors would like to thank Kirk Skeba and Maynard Falconer for the fruitful discussions on this research topic and the anonymous reviewers for their constructive comments. 

%\bibliographystyle{IEEEtran}
%\bibliography{ref}

\end{document}